\journal{ArXiv}
\definecolor{red}{rgb}{1,0,0}
\definecolor{blue}{rgb}{0,0,1}
\definecolor{black}{rgb}{0,0,0}
\newtheorem{thm}{Theorem}
\newtheorem{lemma}{Lemma}
\newcommand{\elementwise}[1]{\mbox{\scriptsize$\circ{#1}$}}
\newcommand{\kmeas}[1]{\mbox{$|\!\!\wr{#1}\wr\!\!|_k$}}
\newcommand{\kmeask}[1]{\mbox{$|\!\!\wr{#1}\wr\!\!|_k^k$}}
\newcommand{\diag}{\mbox{${\rm diag}$}}
\newcommand{\sgn}{\mbox{${\rm sgn}$}}
\renewcommand{\geq}{\geqslant}
\renewcommand{\leq}{\leqslant}
\newcommand{\bm}[1]{\boldsymbol #1}
\newcommand{\btheta}{\boldsymbol \theta}
\renewcommand{\Box}{\hspace*{\fill}~$\square$ \\}
\newcommand{\balpha}{\boldsymbol{\alpha}}
\newcommand{\bbeta}{\boldsymbol{\beta}}
\newcommand{\Real}{\mbox{$\mathbb{R}$}}
\newcommand{\Xpower}{|{\bf X}^T|^{\elementwise{\frac{1}{k-1}}}}
\begin{document}

\begin{frontmatter}

\title{Deterministic Bridge Regression for Compressive Classification}


\author[mymainaddress]{Kar-Ann Toh\corref{mycorrespondingauthor}}
\cortext[mycorrespondingauthor]{Corresponding author}
\ead{katoh@yonsei.ac.kr}

\author[mysecondaryaddress]{Giuseppe Molteni}
\ead{giuseppe.molteni1@unimi.it}

\author[mythirdaddress]{Zhiping Lin}
\ead{ezplin@ntu.edu.s}

\address[mymainaddress]{Room C216, Building 123, School of Electrical and Electronic Engineering, Yonsei University, 50 Yonsei-ro, Seodaemun-gu, Seoul 03722, Korea}
\address[mysecondaryaddress]{Dipartimento di Matematica, Universit\`{a} degli Studi di Milano, Via Saldini 50, 20133 Milano - Italy}
\address[mythirdaddress]{School of Electrical and Electronic Engineering, Nanyang Technological University, Singapore 639798}

\begin{abstract}
	Pattern classification with compact representation is an important component in machine intelligence. In this work, an analytic bridge solution is proposed for compressive classification. The proposal has been based upon solving a penalized error formulation utilizing an approximated $\ell_p$-norm. The solution comes in a primal form for over-determined systems and in a dual form for under-determined systems. While the primal form is suitable for problems of low dimension with large data samples, the dual form is suitable for problems of high dimension but with a small number of data samples. The solution has also been extended for problems with multiple classification outputs. Numerical studies based on simulated and real-world data validated the effectiveness of the proposed solution. 
\end{abstract}

\begin{keyword}
Pattern Classification, Least Squares Regression, Ridge Regression, Bridge Regression, Compressive Estimation.
\end{keyword}

\end{frontmatter}


\section{Introduction}

While high dimensionality of data provides rich feature information, the accompanying challenges along the classifier learning process cannot be ignored. Apart from the computational effort, an appropriate selection or a weighting between discriminative features and non-discriminative ones for accurate prediction is nontrivial. The {\em ordinary least squares} (ols) \cite{Duda1,Hastie1} regression, which minimizes the residual sum of squared errors, provides an unbiased estimation. However, the estimation comes with large variance when the input features have collinearity. Moreover, for under-determined systems, the estimation encounters singularity. The {\em ridge} regression \cite{Hoerl1,Hoerl2} circumvents the problem of singularity by shrinking the estimator via penalizing the weight coefficients in $\ell_2$-norm during minimization. Since then, the penalized models have evolved towards dealing with challenges related to features weighting and selection.    

Instead of utilizing all the predictors which might contain irrelevant ones,
the {\em least absolute shrinkage and selection operator} (lasso) \cite{Tibshirani1} shrinks the estimator with some parameters zero based on the $\ell_1$-norm penalty. This turns the challenge into a feature selection one. In order to encompass both capabilities of variable selection and variable shrinkage, the  {\em elastic-net} \cite{Zou1} leverages amidst ridge and lasso by weighting between the $\ell_1$-norm and the $\ell_2$-norm penalties.  Different from the elastic-net, the bridge regression \cite{Frank1} penalizes the sum of squared errors by the $\ell_p$-norm of weight coefficients. It does variable selection when $0<p\leq1$, and shrinks the coefficients when $p>1$. For $1<p<2$, the {\em bridge} regression shrinks the coefficients unevenly with a higher penalty to those less relevant ones. Attributed to the general penalty form of $\ell_p$-norm, the bridge regression fits well situations when it needs variable selection, weighting or when there exists collinearity.  

In recognition applications of computer vision and image processing related to artificial intelligence, problems with high input dimension and multiple outputs are almost unavoidable. Attributed to the pervasive coverage of social media, a large amount of data samples is deemed available for many applications. However, there are also situations when the data samples are scarce. For examples, scarce images of rare disease \cite{Julia1}, archaeological samples \cite{Irmgard1} and other objects may cast difficulties in effective learning. Several approaches are available to deal with this situation. These approaches include reduction of model complexity \cite{Krishnakumar1,ShuoX1,Fujiwara1,Fan1,Zou2}, data augmentation \cite{Niharika1}, and transfer learning \cite{WangLeye1,Zhou1}. Depending on the availability of supplementary information and the requirement of applications, each approach has its strengths and limitations. The approaches by data augmentation and transfer learning might provide accurate prediction. However, their successful adoption is highly hinged upon matching of the distribution of the augmented, generated or pretrained data with respect to that of the unknown ground truth data apart from the computational effort on data generation. The approach by reduction of model complexity offers a simplified model but might face the bottleneck when the data for learning is not representative. In this article, we shall utilize only the given data and work on the approach of model complexity reduction to suppress non-informative variables. In view of the lack of a deterministic solution for analysis and the outlook for an exactly converged recursive form for online applications, we thereby propose a proximal solution in analytic form to bridge regression for compressive classification. Different from \cite{Krishnakumar1,ShuoX1,Fujiwara1,Fan1,Zou2} which tackled the $\ell_0$, $\ell_1$ and $\ell_p$ formulations via an iterative search, a novel deterministic, non-iterative solution and algorithm has been proposed for solving the bridge regression which utilizes the $\ell_p$ norm penalty on weight coefficients with the error cost function.

The main contributions of this work are enumerated as follows:
\begin{itemize}
	\item Based on an approximation to the $\ell_p$-norm, an analytic solution for the bridge regression has been derived independently in primal form for over-determined systems, and in dual form for under-determined systems. The solution in primal form is found to have the same expression as that obtained based on a local quadratic approximation. 
	The solution in dual form does not find any precedent in the literature as we find no attempt from this perspective. This formulation can be useful for few-shots learning when data is scarce. 
	\item The analytic solution in primal and dual forms are extended to solve for problems with multiple outputs. In pattern classification, this formulation is useful for multiple category prediction. Although one could stack multiple predictors of single-output for multi-outputs prediction, the proposed solution for multi-outputs has the advantage of having a common covariance for output alignment. 
	\item An algorithm that packs the two analytic solutions for multiple outputs under a single estimation framework.
	An extensive study has been performed based on both simulated data and real-world data sets. Both the solution of the primal form and the dual form show stretchable weight coefficients estimation for different penalty settings. The solution in dual form shows higher trade-off between prediction accuracy and coefficient sparseness than that of solution in primal form.
\end{itemize}
The significance of this research outcome for the field of Artificial Intelligence (AI) lies in the establishment, for the first time in the literature, of a set of solutions for bridge regression in deterministic form that is useful for analysis and learning recognition applications. Such a deterministic form not only guarantees the convergence of solution but also renders computational efficiency i.e., without needing an iterative search for the solution. This opens up the feasibility for future development of a convergent bridge solution for online compressive learning. An online solution is an important component for real-time applications in the current AI world.

\section{Preliminaries}

Given a data set $\{\bm{x}_i, {y}_i\}_i^M$ of $M$ samples, an ols regression utilizing the model
\begin{equation}
	y_i = \bm{x}^T_i\balpha + \epsilon_i, \ \ \balpha\in\Real^D
\end{equation}
which minimizes the sum of squared errors $\sum_{i=1}^{M} \epsilon_i^2 = \sum_{i=1}^{M}(y_i - \bm{x}^T_i\balpha)^2$ can be performed with an optimal solution (least squared errors) given by $\hat{\balpha} =({\bf X}^T{\bf X})^{-1}{\bf X}^T{\bf y}$ where ${\bf X}=[\bm{x}_1,...,\bm{x}_M]^T$ and ${\bf y}=[y_1,...,y_M]^T$ for $M \geq D$ when ${\bf X}^T{\bf X}\in\Real^{D\times D}$ has full rank. For the situation when $M < D$ and when ${\bf X}{\bf X}^T\in\Real^{M\times M}$ has full rank, the solution given by $\hat{\balpha} ={\bf X}^T\left({\bf X}{\bf X}^T\right)^{-1}{\bf y}$ is known as the {\em least norm} solution, which is exact. In this work, we shall pack them together and call it the ols solution as follows:
\begin{equation}
	\hat{\balpha} = \left\{ \begin{array}{lll}
		\left({\bf X}^T{\bf X}\right)^{-1}{\bf X}^T{\bf y}, & M \geq D & {\rm (primal\ form)}\\
		{\bf X}^T\left({\bf X}{\bf X}^T\right)^{-1}{\bf y}, & M < D & {\rm (dual\ form)}
	\end{array}\right. .
\end{equation}

The {\em ridge} regression regularizes the ols learning by inclusion of a penalty to the weight coefficients $\balpha$ (also known as learning parameters) based on the $\ell_2$-norm. For $\lambda>0$, the resulted solution for minimizing $\sum_{i=1}^{M}(y_i - \bm{x}^T_i\balpha)^2 + \lambda\sum_{j=1}^{D}\alpha_j^2$ can be written as
\begin{equation}
	\hat{\balpha} = \left\{ \begin{array}{lll} 
		\left({\bf X}^T{\bf X} + \lambda{\bf I}\right)^{-1}{\bf X}^T{\bf y}, & M \geq D  & {\rm (primal\ form)}\\
		{\bf X}^T\left({\bf X}{\bf X}^T + \lambda{\bf I}\right)^{-1}{\bf y}, & M < D  & {\rm (dual\ form)}
	\end{array}\right. ,
\end{equation}
where ${\bf I}$ is an identity matrix congruence to its summing term. Effectively, the ridge regression provides an estimation with a set of shrunk weight coefficients relative to that of the ols. Due to the utilization of the $\ell_2$-norm penalty, the shrinkage is uniform in each of the $D$ dimension.

The {\em bridge} regression generalizes the ridge regression by replacing the $\ell_2$-norm penalty with an $\ell_p$-norm penalty of the weight coefficients where the range of $p$ is commonly taken within $0< p\leq 2$. In other words, the bridge regression uses the following cost function for minimization:
\begin{equation}
	\sum_{i=1}^{M}(y_i - \bm{x}^T_i\balpha)^2 + \lambda\sum_{j=1}^{D}|\alpha_j|^p .
\end{equation}
Due to the difficulty in dealing with the absolute operator and the nonlinear formulation, an analytic or closed-form solution is yet to be available. Moreover, separate treatments according to the under-determined and over-determined scenarios are not available in the literature. The formulation for the under-determined scenario is particularly useful when the data is scarce. This was termed {\em small sample size} (SSS) problem \cite{Jain12,LuJW2} before the deep learning era and is also known as {\em few-shot learning} \cite{WangYq1} in the current literature.

The bridge regression was first seen in \cite{Frank1} where several statistical tools for chemometrics regression were studied. According to the study, the parameter $p$ can be viewed as the degree to which the prior probability is concentrated along the favored directions. A value of $p\rightarrow 0$ places the prior mass towards the directions of the coordinate axes, expressing the prior belief that only a few of the predictor variables are likely to have high relative influence on the response. The structure of bridge was studied in \cite{FuWJ1} where a general approach to solve the bridge regression for $p\geq 1$ was developed. The algorithm, which was based on a modified Newton-Raphson method, solved iteratively for the unique solution for bridge for $p\geq 1$. According to \cite{Fan1}, the solution for bridge is continuous only when $p\geq 1$. In their proposal, a local quadratic approximation has been adopted iteratively for the $\ell_p$ penalized likelihood. It turned out that the minimization problem can be reduced to a quadratic minimization problem where the Newton-Raphson algorithm can be adopted to search for a solution \cite{Fan1}. 
In \cite{Zou2}, a local linear approximation has been proposed to replace the local quadratic approximation for solving the penalized likelihood with better computational efficiency. In \cite{ParkCW1}, both the local linear and local quadratic approximations have been studied. They showed that the bridge estimator is a robust choice under various circumstances comparing with ridge, lasso, and elastic net.

\section{Proximal Bridge Regression}

In this section, we propose an analytic solution for an approximated bridge regression called {\em proximal bridge} regression. The solution comes in primal form for over-determined systems and in dual form for under-determined systems. We shall introduce an approximation to the $\ell_p$-norm before presenting the two solution forms.

\subsection{A $k$-measure for $\ell_p$-norm approximation}

Consider a positive valued penalty term that is an approximation of the $\ell_p$-norm, in
which the absolute value operator is replaced by a differentiable function
$f_{\epsilon}$:
\begin{equation}\label{eqn_k_norm}
	\kmeas{\bm{\alpha}} := \left(\sum^{D}_{j=1}f_{\epsilon}(\alpha_j)^{k}
	\right)^{1/k} \negthinspace\negthinspace ,
\end{equation}
where $\balpha=[\alpha_1,...,\alpha_D]^T$ is a parameter vector.
Here, the power term $k$ replaces $p$ in the $\ell_p$-norm to indicate the approximation. A convenient
choice for approximating the absolute operator, which can be efficiently computed, is
$f_{\epsilon}(\alpha_j)=\sqrt{\alpha_j^2+\epsilon} \approx |\alpha_j|$, $\epsilon>0$ (see \cite{Toh96,Ramirez1}) and
Fig.~\ref{fig_abs_approx}). Note that $\lim_{\epsilon\rightarrow 0} f_{\epsilon}(\cdot) = |\cdot|$
for arbitrary $\epsilon>0$. For finite $\epsilon$ values, the function $\kmeas{\bm{\alpha}}$ is
not a norm because it does not have the absolute homogeneity property. We shall call
$\kmeas{\cdot}$ \eqref{eqn_k_norm} a \emph{$k$-measure} operator for convenience hereon.

\begin{figure}[hhh]
	\begin{center}
		\epsfxsize=6.8cm 
		\epsffile[76   199   546   606]{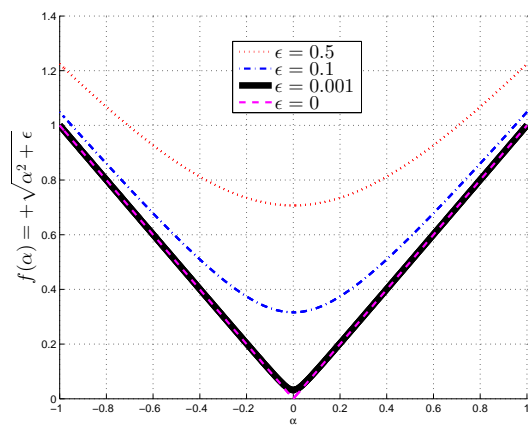}
		\caption{Plot of $f(\alpha)=\sqrt{\alpha^2+\epsilon}$ at several $\epsilon$ values.}
		\label{fig_abs_approx}
	\end{center}
\end{figure}

In the following, the raised power form of $k$-measure is shown to be convex when the
approximation function $f_{\epsilon}(\cdot)$ is convex.\\
\begin{lemma} \label{lemma_convex}
	$\kmeask{\balpha}$ is convex on $\balpha$ when $f_{\epsilon}$ is convex for all $k\geq 1$.
\end{lemma}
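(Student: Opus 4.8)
The plan is to reduce the claim to a one‑dimensional statement and then invoke the standard rule for composing convex functions. By definition, $\kmeask{\balpha} = \sum_{j=1}^{D} f_{\epsilon}(\alpha_j)^{k}$, i.e.\ the $k$-measure raised to the power $k$ is just the coordinatewise sum of $f_{\epsilon}(\alpha_j)^{k}$. Since a finite sum of convex functions is convex, and each summand $\balpha \mapsto f_{\epsilon}(\alpha_j)^{k}$ depends on $\balpha$ only through the single coordinate $\alpha_j$, it suffices to prove that the scalar map $\phi_k \colon t \mapsto f_{\epsilon}(t)^{k}$ is convex on $\Real$ for every $k \geq 1$.

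To establish convexity of $\phi_k$, I would first note that $f_{\epsilon}$ is nonnegative — this is built into the setup, since $f_{\epsilon}$ is the chosen approximation of $|\cdot|$ (indeed $f_{\epsilon}(t) = \sqrt{t^2 + \epsilon} \geq \sqrt{\epsilon} > 0$), so its range lies in $[0,\infty)$. Next, for $k \geq 1$ the power function $\psi \colon s \mapsto s^{k}$ is convex and nondecreasing on $[0,\infty)$. The composition of a convex nondecreasing outer function with a convex inner function is convex; applying this with outer function $\psi$ restricted to $[0,\infty)$ (which contains the range of $f_{\epsilon}$) and inner function $f_{\epsilon}$ gives that $\phi_k = \psi \circ f_{\epsilon}$ is convex. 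Summing over $j$ finishes the proof.

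If a self‑contained argument is preferred to citing the composition rule, the three ingredients can be chained by hand: convexity of $f_{\epsilon}$ gives $f_{\epsilon}(\lambda t_1 + (1-\lambda) t_2) \leq \lambda f_{\epsilon}(t_1) + (1-\lambda) f_{\epsilon}(t_2)$; monotonicity of $s \mapsto s^{k}$ on $[0,\infty)$ raises both sides to the $k$‑th power; and convexity of $s \mapsto s^{k}$ on $[0,\infty)$ bounds the right‑hand side by $\lambda f_{\epsilon}(t_1)^{k} + (1-\lambda) f_{\epsilon}(t_2)^{k}$. Alternatively, when $f_{\epsilon}$ is taken to be $C^2$ (as for the stated choice) one can just compute $\phi_k''(t) = k(k-1) f_{\epsilon}(t)^{k-2} f_{\epsilon}'(t)^2 + k f_{\epsilon}(t)^{k-1} f_{\epsilon}''(t)$ and observe both terms are nonnegative because $k \geq 1$, $f_{\epsilon} \geq 0$, and $f_{\epsilon}'' \geq 0$.

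The main — and essentially the only — obstacle is the monotonicity/nonnegativity point: for non‑integer or odd‑integer exponents $k$, the outer map $s \mapsto s^{k}$ is not convex on all of $\Real$, so the argument genuinely requires the range of $f_{\epsilon}$ to lie in $[0,\infty)$. This is precisely why the hypothesis that $f_{\epsilon}$ is a (nonnegative) approximation of the absolute value, rather than an arbitrary convex function, is what makes the lemma go through; no other delicate issue arises.
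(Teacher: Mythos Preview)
Your proposal is correct and follows essentially the same route as the paper: both arguments reduce to the coordinatewise sum, observe that $s\mapsto s^{k}$ is nondecreasing and convex on $[0,\infty)$ for $k\geq 1$, and then chain this with the convexity of $f_{\epsilon}$ (the paper writes out the inequality chain explicitly, which is your ``self-contained'' variant). Your explicit emphasis on the nonnegativity of $f_{\epsilon}$ as the point that makes the composition valid is a useful clarification that the paper leaves implicit.
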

\begin{proof}
	Based on the convexity of $f_{\epsilon}$ on each element $\alpha_i$, $i=1,...,D$ (of the
	parameter vector $\balpha$), we have
	\begin{equation}\label{eqn_convexity_of_f}
		f_{\epsilon}(\lambda\alpha_{i1}+(1-\lambda)\alpha_{i2})
		\leq \lambda f_{\epsilon}(\alpha_{i1}) + (1-\lambda) f_{\epsilon}(\alpha_{i2}),
		\ \ \ 0\leq\lambda\leq 1.
	\end{equation}
	Suppose $h(\theta):=\theta^{k}$ with $k,\theta>0$ where we know that $h$ is nondecreasing
	and convex on $\theta$ since $dh/d\theta=k\theta^{k-1}\geq 0$ and
	$d^2h/d\theta^2=k(k-1)\theta^{k-2}\geq 0$, $\forall k\geq 1$ (see also \cite{Hardy1}).
	Using \eqref{eqn_convexity_of_f} plus the fact that $h$ is nondecreasing and convex, we
	have for each $i=1,...,D$,
	\begin{eqnarray}
		h(f_{\epsilon}(\lambda\alpha_{i1}+(1-\lambda)\alpha_{i2}))
		&\leq & h(\lambda f_{\epsilon}(\alpha_{i1}) + (1-\lambda) f_{\epsilon}(\alpha_{i2}))
		\nonumber \\
		&\leq & \lambda h(f_{\epsilon}(\alpha_{i1})) + (1-\lambda) h(f_{\epsilon}(\alpha_{i2})),
		\ \ \ 0\leq\lambda\leq 1 . \label{eqn_convexity_of_hf1}
	\end{eqnarray}
	Since summation of convex functions preserves the convexity, we have
	\begin{eqnarray}
		\sum^{D}_{i=1} h(f_{\epsilon}(\lambda\alpha_{i1}+(1-\lambda)\alpha_{i2}))
		\leq \sum^{D}_{i=1} \lambda h(f_{\epsilon}(\alpha_{i1})) + (1-\lambda) h(f_{\epsilon}(\alpha_{i2})),
		\ \ 0\leq\lambda\leq 1,
		\label{eqn_convexity_of_hf4}
	\end{eqnarray}
	which means convexity of $\sum^{D}_{i=1} h(\alpha_i) = \sum^{D}_{i=1} f_{\epsilon}(\alpha_i)^k =
	\kmeask{\balpha}$ on $\balpha=[\alpha_1,...,\alpha_{D}]^T$. 
\end{proof}

Fig.~\ref{fig_q_norm} shows the contours of the $\ell_{p}$-norm metric
for $1<p\leq 2$ and the corresponding ${k}$-measure ($\kmeas{\balpha}$,
\eqref{eqn_k_norm}) together with its $k$-powered form ($\kmeask{\balpha}$) within the
same interval. From the third row of plots in Fig.~\ref{fig_q_norm}, except for the difference
in curvature, we see that the entire ${k}$-measure and its $k$-powered form approximate
well to the solution $p$-norm for the plotted range of $1<\{p,k\}\leq 2$. 
This suggests vertices of the vector space being feasible solutions for the desired
constrained solution search. Such an observation shall be exploited in the following
development for compressive solution when $1<k\leq 2$.\\*[1mm]
\begin{figure}[hhh] 
	\begin{center}
		\epsfxsize=9.8cm
		\epsfysize=13.8cm
		\epsffile[46    50   504   638]{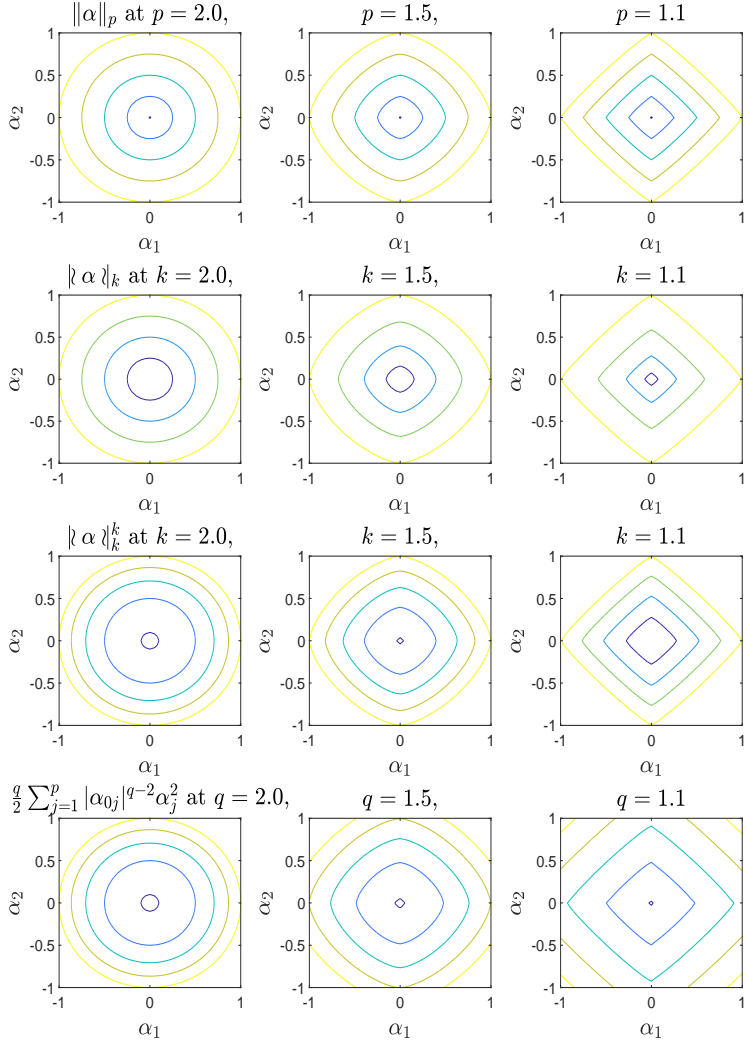}
		\caption{Contour plots at levels [0.1, 0.4, 0.7, 1]. Top row: $\|\bm{\alpha}\|_p$ for $p\in\{2.0,1.5,1.1\}$.
		Second row: $\kmeas{\bm{\alpha}}$ for $k\in\{2.0,1.5,1.1\}$ at $\epsilon=0.0001$.
		Third row: $\kmeask{\bm{\alpha}}$ for $k\in\{2.0,1.5,1.1\}$ at $\epsilon=0.0001$. Bottom row: $\frac{q}{2}\sum^p_{j=1}|\alpha_{0j}|^{q-2}\alpha_j^2$ for $q\in\{2.0,1.5,1.1\}$ with $\alpha_{0j}=\alpha_j-0.01$. }
		\label{fig_q_norm}
	\end{center}
\end{figure}

The bottom row of Fig.~\ref{fig_q_norm} shows another approximation of the $\ell_p$-norm by a Local Quadratic Approximation (LQA, \cite{Fan1}) given by
\begin{equation} \label{eqn_LQA_approx}
	|\alpha_j|^q_q \approx |\alpha_{0j}|^q + \frac{q}{2}\frac{|\alpha_{0j}|^{q-1}}{|\alpha_{0j}|}(\alpha_j^2 - \alpha_{0j}^2),
\end{equation}	
where the minimization problem of an approximated bridge regression can be expressed as
\begin{equation} \label{eqn_LQA_min}
	\arg\min_{\balpha}\left\{({\bf y}-{\bf X}\bm{\alpha})^T({\bf y}-{\bf X}\bm{\alpha}) + \frac{q}{2} \sum^D_{j=1}|\alpha_{0j}|^{q-2}\alpha_j^2 \right\}.
\end{equation}	
The plots of LQA in this row have been generated based on 
$\frac{q}{2}\sum^p_{j=1}|\alpha_{0j}|^{q-2}\alpha_j^2$ for $q\in\{2.0,1.5,1.1\}$ at $\alpha_{0j}=\alpha_j-0.01$.
\\


\subsection{Proximal bridge regression in primal form}

For over-determined systems, we minimize the sum of squared errors with a $k$-measure penalty as shown in Theorem~\ref{thm1} below. We call such minimization \emph{proximal bridge regression in primal form} (or \emph{primal $p$-bridge} in brief). In this formulation, $\circ$ denotes an element-wise operator. For example, ${\bf A}^{\elementwise{k}}$ indicates raising each element of ${\bf A}$ to the power $k$. Also, $\diag(\bm{a})$ denotes a diagonal matrix with its diagonal elements given by vector $\bm{a}$, and ${\rm eig}_j({\bf A})$ denotes the $j$th eigenvalue of matrix ${\bf A}$. 

\begin{thm}\label{thm1}
	Given the data $\{ \bm{x}_i, {y}_i\}$, $i=1,...,M$ where $ \bm{x}_i=[x_{i,1},\cdots,x_{i,D}]^T$ and ${y}_i$ are respectively the regressors and the response for the $i$th observation. Consider the linear regression model ${\bf X}\bm{\alpha}$ with parameter vector $\bm{\alpha}\in\Real^D$ and regressor matrix ${\bf X}=[\bm{x}_1,\cdots,\bm{x}_M]^T$. Suppose ${\bf X}^T{\bf X}$ is of full rank. Then, under the limiting case of ${\bm{\epsilon}\rightarrow{\bf 0}}$ and for $k\geq 1$, $\hat{\balpha}$ that satisfies
	\begin{eqnarray} 
		\balpha
		&=& \left(\frac{\lambda{k}}{2}\,\diag\{|\balpha|^{\elementwise{(k-2)}}\} + {\bf X}^T{\bf X}\right)^{-1}{\bf X}^T{\bf y}
		\label{eqn_qnorm_alpha_solution_thm1}
	\end{eqnarray}
	minimizes 
	\begin{equation}\label{eqn_q_norm_obj888}
		({\bf y}-{\bf X}\bm{\alpha})^T({\bf y}-{\bf X}\bm{\alpha}) + \lambda\kmeask{\balpha} ,
	\end{equation}
	when the matrix $(\frac{\lambda{k}}{2}\,\diag\{|\balpha|^{\elementwise{(k-2)}}\} + {\bf X}^T{\bf X})$ is invertible. This happens for sure as soon as 
	\begin{equation}\label{eqn_nonsingular_reg}
		\frac{\lambda{k}}{2} \max_{j\in\{1,...,D\}} (|\alpha_j|^{(k-2)})   < \min_{j\in\{1,...,D\}} ({\rm eig}_j ({\bf X}^T{\bf X})).
	\end{equation}
\end{thm}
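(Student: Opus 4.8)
The plan is to treat the penalized cost \eqref{eqn_q_norm_obj888} as a smooth convex program for each fixed $\bm{\epsilon}>{\bf 0}$, read off its first-order optimality condition, and only then pass to the limit $\bm{\epsilon}\rightarrow{\bf 0}$; the invertibility assertion will be handled separately as a purely linear-algebraic perturbation estimate. First I would fix $\bm\epsilon$ and write $J(\balpha)=({\bf y}-{\bf X}\balpha)^T({\bf y}-{\bf X}\balpha)+\lambda\sum_{j=1}^{D}(\alpha_j^2+\epsilon)^{k/2}$, using $f_{\epsilon}(\alpha_j)^k=(\alpha_j^2+\epsilon)^{k/2}$. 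The residual term contributes $-2{\bf X}^T({\bf y}-{\bf X}\balpha)$ to $\nabla J$, and the $j$th partial derivative of the penalty is $\lambda\frac{k}{2}(\alpha_j^2+\epsilon)^{k/2-1}(2\alpha_j)=\lambda k\,f_{\epsilon}(\alpha_j)^{k-2}\alpha_j$, so $\nabla J(\balpha)=-2{\bf X}^T({\bf y}-{\bf X}\balpha)+\lambda k\,\diag\{f_{\epsilon}(\balpha)^{\elementwise{(k-2)}}\}\balpha$. Since $f_{\epsilon}(t)=\sqrt{t^2+\epsilon}$ is convex, Lemma~\ref{lemma_convex} gives convexity of $\kmeask{\balpha}$, and the residual sum of squares is convex (indeed strictly convex, as ${\bf X}$ has full column rank here), so $J$ is convex and a point is a global minimizer exactly when $\nabla J$ vanishes there.

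Next I would set $\nabla J(\balpha)={\bf 0}$ and divide by $2$ to obtain $\big(\frac{\lambda k}{2}\diag\{f_{\epsilon}(\balpha)^{\elementwise{(k-2)}}\}+{\bf X}^T{\bf X}\big)\balpha={\bf X}^T{\bf y}$. Letting $\bm{\epsilon}\rightarrow{\bf 0}$ and using $f_{\epsilon}(\alpha_j)\rightarrow|\alpha_j|$ entrywise replaces the diagonal matrix by $\diag\{|\balpha|^{\elementwise{(k-2)}}\}$, so a minimizer of \eqref{eqn_q_norm_obj888} satisfies $\big(\frac{\lambda k}{2}\diag\{|\balpha|^{\elementwise{(k-2)}}\}+{\bf X}^T{\bf X}\big)\balpha={\bf X}^T{\bf y}$; premultiplying by the inverse of the bracketed matrix (assumed invertible) gives exactly \eqref{eqn_qnorm_alpha_solution_thm1}. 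Conversely, any $\hat{\balpha}$ solving \eqref{eqn_qnorm_alpha_solution_thm1} is a zero of this limiting gradient and hence, by convexity, a global minimizer of \eqref{eqn_q_norm_obj888}, which closes the equivalence.

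For the invertibility criterion I would set ${\bf M}:=\frac{\lambda k}{2}\diag\{|\balpha|^{\elementwise{(k-2)}}\}+{\bf X}^T{\bf X}$, note that it is symmetric and that its diagonal part has spectral norm $\frac{\lambda k}{2}\max_{j}|\alpha_j|^{(k-2)}$, and apply Weyl's inequality: the smallest eigenvalue obeys $\min_j{\rm eig}_j({\bf M})\geq\min_j{\rm eig}_j({\bf X}^T{\bf X})-\frac{\lambda k}{2}\max_j|\alpha_j|^{(k-2)}$, which is strictly positive precisely under \eqref{eqn_nonsingular_reg}; then ${\bf M}$ is positive definite and in particular invertible. (The diagonal perturbation is in fact positive semidefinite, so ${\bf M}$ is automatically invertible once ${\bf X}^T{\bf X}$ has full rank; \eqref{eqn_nonsingular_reg} is just the clean quantitative margin the statement records. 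Equivalently, one may factor ${\bf M}={\bf X}^T{\bf X}({\bf I}+({\bf X}^T{\bf X})^{-1}\frac{\lambda k}{2}\diag\{|\balpha|^{\elementwise{(k-2)}}\})$ and bound the spectral radius of the correction below $1$ via a Neumann series.)

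The hard part, conceptually, is justifying the limit–optimality interchange in the second paragraph: for $1\leq k<2$ the limiting penalty $\sum_j|\alpha_j|^{k}$ is nonsmooth on the coordinate hyperplanes and the entries $|\alpha_j|^{k-2}$ blow up as $\alpha_j\rightarrow 0$, so the fixed-point characterization is genuinely meaningful only for minimizers with no vanishing coordinate (for $k\geq 2$ no such issue arises). The clean way around this is the one the theorem already adopts — to phrase everything in the limiting sense $\bm{\epsilon}\rightarrow{\bf 0}$ of the honestly smooth, strictly convex problems $J$, for which the gradient computation above is valid verbatim; everything else reduces to routine differentiation and a standard eigenvalue perturbation bound.
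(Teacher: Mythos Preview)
Your proposal is correct and follows essentially the same route as the paper: differentiate the smooth $k$-measure objective, set the gradient to zero, take the limit $\bm{\epsilon}\rightarrow{\bf 0}$ to obtain the fixed-point equation, and invoke Lemma~\ref{lemma_convex} for convexity. The only minor deviation is that the paper establishes invertibility via the Neumann-series factorization ${\bf M}={\bf X}^T{\bf X}({\bf I}+({\bf X}^T{\bf X})^{-1}{\bf A})$ you mention parenthetically, rather than Weyl's inequality; your additional observation that the diagonal perturbation is positive semidefinite (so ${\bf M}$ is automatically invertible once ${\bf X}^T{\bf X}$ has full rank) is sharper than either argument and worth keeping.
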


\begin{proof} According to the definition of $k$-measure in \eqref{eqn_k_norm}, let
	$\bar{\bm{\alpha}}:=[(\alpha_0^2+\epsilon)^{{k}/4},\cdots,(\alpha_{D-1}^2+\epsilon)^{{k}/4}]^T$
	where we can write $\kmeas{\bm{\alpha}}=(\bar{\bm{\alpha}}^T\bar{\bm{\alpha}})^{1/k}$ and
	$\kmeask{\bm{\alpha}}=(\bar{\bm{\alpha}}^T\bar{\bm{\alpha}})$. 
	Next, take the first derivative of \eqref{eqn_q_norm_obj888} with respect to $\balpha$ and set it to zero:
	\begin{eqnarray}
		\frac{\partial}{\partial\balpha}\left(({\bf y}-{\bf X}\balpha)^T({\bf y}-{\bf X}\balpha) + \lambda   	\bar{\balpha}^T\bar{\balpha}\right)
		&=& {\bf 0} \nonumber \\
		-2{\bf X}^T({\bf y}-{\bf X}\balpha) + 
		\lambda\frac{k}{4}\cdot2\balpha\circ\left(\balpha^{\elementwise{2}}+\bm{\epsilon}\right)^{\elementwise{(\frac{k}{4}-1})}\circ 2\bar{\balpha}   
		&=& {\bf 0} \nonumber \\
		-2{\bf X}^T({\bf y}-{\bf X}\balpha) +
		\lambda{k}\,\balpha\circ\left(\balpha^{\elementwise{2}}+\bm{\epsilon}\right)^{\elementwise{(\frac{k}{4}-1})}\circ
		\left(\bm{\alpha}^{\elementwise{2}}+\bm{\epsilon}\right)^{\elementwise{\frac{k}{4}}} 
		&=& {\bf 0} \nonumber \\
		\Rightarrow\hspace{5mm} \lambda {k}\,\balpha\circ\left(\balpha^{\elementwise{2}}+\bm{\epsilon}\right)^{\elementwise{(\frac{k}{2}-1})} 
		&=& 2{\bf X}^T({\bf y}-{\bf X}\balpha).
		\label{eqn_qnorm_alpha550}
	\end{eqnarray}
	For the limiting case of $\bm{\epsilon}$, we have
	\begin{eqnarray}
		\lim_{\bm{\epsilon}\rightarrow{\bf 0}}{k}\,\balpha\circ\left(\balpha^{\elementwise{2}}+\bm{\epsilon}\right)^{\elementwise{(\frac{k}{2}-1})}
		&=& {k}\,\balpha\circ\left(\balpha^{\elementwise{2}}\right)^{\elementwise{(\frac{k}{2}-1})} \nonumber\\
		&=& {k}\,\sgn(\balpha)\circ(\balpha^{\elementwise{2}})^{\frac{1}{2}}\circ\left(\balpha^{\elementwise{2}}\right)^{\elementwise{\frac{k-2}{2}}} \nonumber\\
		&=& {k}\,\sgn(\balpha)\circ\left(\balpha^{\elementwise{2}}\right)^{\elementwise{\frac{k-1}{2}}} .
		\label{eqn_qnorm_alpha338}	
	\end{eqnarray}
	Equation~\eqref{eqn_qnorm_alpha550} can then be written as
	\begin{eqnarray}
		\lambda {k}\,\sgn(\balpha)\circ\left(\balpha^{\elementwise{2}}\right)^{\elementwise{\frac{k-1}{2}}} 
		&=& 2{\bf X}^T({\bf y} - {\bf X}\balpha)\nonumber\\
		\Rightarrow\ \frac{\lambda{k}}{2}\,\sgn(\balpha)\circ|\balpha|^{\elementwise{(k-1)}} 
		&=& {\bf X}^T({\bf y} - {\bf X}\balpha). \nonumber\\
		\frac{\lambda{k}}{2}\,\sgn(\balpha)\circ|\balpha|^{\elementwise{(k-1)}} + {\bf X}^T{\bf X}\balpha
		&=& {\bf X}^T{\bf y}, \nonumber\\
		\frac{\lambda{k}}{2}\,\balpha\circ|\balpha|^{\elementwise{(k-2)}} + {\bf X}^T{\bf X}\balpha
		&=& {\bf X}^T{\bf y}, \nonumber\\
		\left(\frac{\lambda{k}}{2}\,\diag\{|\balpha|^{\elementwise{(k-2)}}\} + {\bf X}^T{\bf X}\right)\balpha
		&=& {\bf X}^T{\bf y},
		\label{eqn_qnorm_alpha660}
	\end{eqnarray}
	which leads to \eqref{eqn_qnorm_alpha_solution_thm1} when $\left(\frac{\lambda{k}}{2}\,\diag\{|\balpha|^{\elementwise{(k-2)}}\} + {\bf X}^T{\bf X}\right)$ is nonsingular. 
	Let ${\bf A}=\frac{\lambda{k}}{2}\,\diag\{|\balpha|^{\elementwise{(k-2)}}\}$ and 
	${\bf B}={\bf X}^T{\bf X}$ which is given to be of full rank, then ${\bf A}+{\bf B}$ can be written as ${\bf B}({\bf I} + {\bf B}^{-1}{\bf A})$. Based on the power series expansion, we have $({\bf I} + {\bf B}^{-1}{\bf A})^{-1}= {\bf I} + (-{\bf B}^{-1}{\bf A}) + (-{\bf B}^{-1}{\bf A})^2 + (-{\bf B}^{-1}{\bf A})^3 + \cdots$, that actually converges to the inverse of ${\bf I}+{\bf B}^{-1}{\bf A}$ whenever $\|-{\bf B}^{-1}{\bf A}\| < 1$ for any sub-multiplicative norm.
	When ${\bf A}$ is symmetric, we have $||{\bf A}|| = \max_j(|{\rm eig}_j({\bf A})|)$ for matrix norm given by $||{\bf A}||  := \sup_{{\bf v} \neq 0}  \frac{||{\bf A}{\bf v}||}{||{\bf v}||}$. Since this norm is sub-multiplicative, we also have $\|-{\bf B}^{-1}{\bf A}\|\leq\|-{\bf B}^{-1}\|\hspace{-1mm}\cdot\hspace{-1mm}\|{\bf A}\|$. Hence, $\|-{\bf B}^{-1}{\bf A}\| < 1$ is implied by 
	\begin{equation} 
		\max_j (|{\rm eig}_j (({\bf X}^T{\bf X})^{-1})|)	\cdot \frac{\lambda{k}}{2} \max_j (|\alpha_j|^{(k-2)}) < 1 ,   
	\end{equation}
	or
	\begin{equation} 
		\frac{1}{\min_j (|{\rm eig}_j ({\bf X}^T{\bf X})|)}\cdot \frac{\lambda{k}}{2} \max_j (|\alpha_j|^{(k-2)}) < 1 .   
	\end{equation}
	This leads to \eqref{eqn_nonsingular_reg} where the absolute values are not necessary as eigenvalues for ${\bf X}^T{\bf X}$ are positive.

	Finally, as both $({\bf y}-{\bf X}\balpha)^T({\bf y}-{\bf X}\balpha)$ and $\kmeask{\balpha}$ (Lemma~\ref{lemma_convex}) are convex on $\balpha$ for $k\geq 1$, the summation of two convex functions in the objective function~\eqref{eqn_q_norm_obj888} is convex. Hence the minimizer.\\
\end{proof}

\noindent{\bf Remark 1: } 

It is interesting to observe that the solution given by \eqref{eqn_qnorm_alpha_solution_thm1} appears to have a similar form (see \eqref{eqn_qnorm_alpha_solution_Park}) as that in \cite{Fan1,ParkCW1} where a different $\ell_p$-norm approximation, based on the LQA (i.e., minimization of \eqref{eqn_LQA_min}) instead of the $k$-measure (i.e, minimization of \eqref{eqn_q_norm_obj888}), had been utilized for the penalty term: 
\begin{equation} \label{eqn_qnorm_alpha_solution_Park}
	\hat{\balpha}_j
	= \left(\frac{\lambda{q}}{2}\,\diag\{|\balpha_{0j}|^{\elementwise{(q-2)}}\} + {\bf X}^T{\bf X}\right)^{-1}{\bf X}^T{\bf y}.
\end{equation}

\hspace*{\fill}\Box

\subsection{Proximal bridge regression in dual form}

For under-determined systems, we minimize $\kmeask{\bm{\alpha}}$ subject to ${\bf y}={\bf X}\bm{\alpha}$. 
We call such minimization \emph{proximal bridge regression in dual form} (or \emph{dual $p$-bridge} in brief).
Similar to the primal proximal bridge regression, our goal here is to have a compressive estimate for $1<k\leq 2$.

\begin{thm}\label{thm2}
	Consider an under-determined system ${\bf y}={\bf X}\bm{\alpha}$, where ${\bf
		y}\in\Real^M$ is the given target vector, ${\bf X}\in\Real^{M\times D}$ is the regressor matrix and $\bm{\alpha}\in\Real^D$ is the parameter vector, with number of samples $M<D$ regressor dimensions.	
	Assume ${\bf X}{\bf X}^T$ and ${\bf X}\Xpower$ are of full rank for certain $k>1$. Then for that $k>1$ and under the limiting case of ${\bm{\epsilon}\rightarrow{\bf 0}}$, the stationary point given by
	\begin{eqnarray}
		\hat{\balpha} &=&
		\sgn\left(\btheta\right)\circ 
		\left|{\bf X}^T\left({\bf X}{\bf X}^T\right)^{-1}{\bf X}\btheta^{\elementwise{(k-1)}}\right|^{\elementwise{\frac{1}{k-1}}}  ,
		\label{eqn_qnorm_alpha_solution_dual_thm2}
	\end{eqnarray}
	where
	\begin{eqnarray}
		\btheta
		&=& \Xpower\left[{\bf X}\Xpower\right]^{-1} {\bf y} ,
		\label{eqn_qnorm_alpha_theta}
	\end{eqnarray}
	minimizes
	\begin{equation}\label{eqn_q_norm_obj111}
		\kmeask{\balpha}
		\ \,{\rm subject\ to\ \,} {\bf y}={\bf X}\balpha .
	\end{equation}
\end{thm}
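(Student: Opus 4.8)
The plan is to treat the constrained problem \eqref{eqn_q_norm_obj111} as a convex program, write down its Karush--Kuhn--Tucker (stationarity) conditions, and exhibit the stated $\hat\balpha$ as the point meeting them (unique when $k>1$). First I would form the Lagrangian $L(\balpha,\bm{\mu})=\kmeask{\balpha}+\bm{\mu}^T(\mathbf{y}-\mathbf{X}\balpha)$ using, as in the proof of Theorem~\ref{thm1}, the differentiable surrogate $\bar\balpha$ with $\kmeask{\balpha}=\bar\balpha^T\bar\balpha$, and set $\partial L/\partial\balpha=\mathbf{0}$. The elementwise differentiation carried out in \eqref{eqn_qnorm_alpha550}--\eqref{eqn_qnorm_alpha338} then gives $k\,\balpha\circ(\balpha^{\elementwise{2}}+\bm{\epsilon})^{\elementwise{(\frac{k}{2}-1)}}=\mathbf{X}^T\bm{\mu}$, and under $\bm{\epsilon}\rightarrow\mathbf{0}$ this becomes $k\,\sgn(\balpha)\circ|\balpha|^{\elementwise{(k-1)}}=\mathbf{X}^T\bm{\mu}$. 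The crucial structural consequence is that $\sgn(\balpha)\circ|\balpha|^{\elementwise{(k-1)}}$ must lie in the row space $\operatorname{range}(\mathbf{X}^T)$; inverting the elementwise map $t\mapsto\sgn(t)|t|^{k-1}$ (strictly monotone for $k>1$) then forces $\balpha=\sgn(\mathbf{X}^T\bm{\mu})\circ|\tfrac{1}{k}\mathbf{X}^T\bm{\mu}|^{\elementwise{\frac{1}{k-1}}}$.

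The second half of the argument pins down $\bm{\mu}$ from the linear constraint $\mathbf{y}=\mathbf{X}\balpha$, and this is where $\Xpower$ and $\btheta$ enter. Substituting the displayed form of $\balpha$ into $\mathbf{X}\balpha=\mathbf{y}$ yields an equation that is \emph{nonlinear} in $\bm{\mu}$ because of the fractional power, and I expect this elimination step to be the main obstacle. The route around it is to first note that the explicit vector $\btheta=\Xpower[\mathbf{X}\Xpower]^{-1}\mathbf{y}$ is itself feasible: since $\mathbf{X}\Xpower$ is assumed invertible, $\mathbf{X}\btheta=(\mathbf{X}\Xpower)(\mathbf{X}\Xpower)^{-1}\mathbf{y}=\mathbf{y}$. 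Treating $\btheta$ as the carrier of the solution's sign pattern and scale, one re-expresses the membership ``$\sgn(\balpha)\circ|\balpha|^{\elementwise{(k-1)}}\in\operatorname{range}(\mathbf{X}^T)$'' by applying to $\btheta^{\elementwise{(k-1)}}$ the orthogonal projector $\mathbf{H}:=\mathbf{X}^T(\mathbf{X}\mathbf{X}^T)^{-1}\mathbf{X}$ onto that subspace---well defined because $\mathbf{X}\mathbf{X}^T$ is assumed invertible---and then inverts the power map to recover $\balpha$; this produces exactly \eqref{eqn_qnorm_alpha_solution_dual_thm2}, namely $\hat\balpha=\sgn(\btheta)\circ|\mathbf{H}\btheta^{\elementwise{(k-1)}}|^{\elementwise{\frac{1}{k-1}}}$.

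To finish, I would verify directly that this $\hat\balpha$ satisfies both optimality conditions: stationarity holds because $\sgn(\hat\balpha)\circ|\hat\balpha|^{\elementwise{(k-1)}}=\mathbf{H}\btheta^{\elementwise{(k-1)}}$, which by construction lies in $\operatorname{range}(\mathbf{X}^T)$ and hence equals $\mathbf{X}^T\bm{\mu}$ for a suitable $\bm{\mu}$; and feasibility $\mathbf{X}\hat\balpha=\mathbf{y}$ should be read off from $\mathbf{X}\btheta=\mathbf{y}$ together with $\mathbf{H}$ acting as the identity on $\operatorname{range}(\mathbf{X}^T)$. Since $\kmeask{\balpha}$ is convex by Lemma~\ref{lemma_convex} and the constraint is affine, any feasible point meeting the stationarity conditions is a global minimizer, and for $k>1$ strict convexity makes it unique---which is the assertion. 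The step I would scrutinize most carefully is the sign bookkeeping: one must check that $\sgn(\btheta)$ and $\sgn(\mathbf{H}\btheta^{\elementwise{(k-1)}})$ agree componentwise (equivalently, that $\btheta^{\elementwise{(k-1)}}$ already lies in $\operatorname{range}(\mathbf{X}^T)$, so that projecting by $\mathbf{H}$ flips no sign), since it is exactly this that makes both the feasibility and the stationarity verifications go through; the two full-rank hypotheses on $\mathbf{X}\mathbf{X}^T$ and $\mathbf{X}\Xpower$ are precisely what keep every inverse in the construction meaningful.
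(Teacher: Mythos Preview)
Your plan mirrors the paper's proof in every structural respect: form the Lagrangian with multiplier $\bbeta$, differentiate the surrogate $\bar\balpha^T\bar\balpha$ elementwise, pass to the limit $\bm\epsilon\to\mathbf{0}$ to obtain $k\,\sgn(\balpha)\circ|\balpha|^{\elementwise{(k-1)}}=\mathbf{X}^T\bbeta$, invert the odd power map, and close with convexity via Lemma~\ref{lemma_convex}. The one substantive difference is how the multiplier is eliminated. The paper does not argue through the projector $\mathbf{H}=\mathbf{X}^T(\mathbf{X}\mathbf{X}^T)^{-1}\mathbf{X}$ as you do; instead it introduces an \emph{ansatz} \eqref{eqn_qnorm_alpha_gamma}, positing that $\balpha$ has the special form $\Xpower\bgamma$ for some $\bgamma\in\Real^M$, uses feasibility $\mathbf{X}\balpha=\mathbf{y}$ to solve $\bgamma=[\mathbf{X}\Xpower]^{-1}\mathbf{y}$ (which is exactly where $\btheta$ appears), and then back-substitutes into the stationarity equation to recover $\bbeta$ and hence $\hat\balpha$. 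Your projector framing is a cleaner geometric reading of the same algebra and lands on the identical formula; what it buys is a transparent reason why $(\mathbf{X}\mathbf{X}^T)^{-1}$ enters, while the paper's ansatz makes it more explicit why $\Xpower$ is the right object to pair with $\mathbf{X}$.

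Your flagged concern about sign bookkeeping is exactly the soft spot in the paper's argument as well. The paper obtains $\sgn(\mathbf{X}^T\bbeta)=\sgn(\btheta)$ from the first line of \eqref{eqn_qnorm_alpha_gamma2}, which is the algebraic counterpart of your requirement that $\mathbf{H}$ not flip the sign of $\btheta^{\elementwise{(k-1)}}$. Be aware, though, that your feasibility check is not as immediate as you state: ``$\mathbf{X}\btheta=\mathbf{y}$ together with $\mathbf{H}$ acting as the identity on $\operatorname{range}(\mathbf{X}^T)$'' only yields $\mathbf{X}\hat\balpha=\mathbf{y}$ once you already know $\mathbf{H}\btheta^{\elementwise{(k-1)}}=\btheta^{\elementwise{(k-1)}}$ (so that $\hat\balpha=\btheta$); this is precisely the content the paper supplies through its ansatz rather than through a projector identity, and it is the one place where your outline would need the paper's device (or an equivalent argument) to reach the same level of detail.
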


\begin{proof}
	According to the definition of $k$-measure in \eqref{eqn_k_norm}, let
	$\bar{\bm{\alpha}}:=[(\alpha_0^2+\epsilon)^{{k}/4},\cdots,(\alpha_{D-1}^2+\epsilon)^{{k}/4}]^T$
	where we can write $\kmeas{\bm{\alpha}}=(\bar{\bm{\alpha}}^T\bar{\bm{\alpha}})^{1/k}$ and
	$\kmeask{\bm{\alpha}}=(\bar{\bm{\alpha}}^T\bar{\bm{\alpha}})$. Then, taking the first
	derivative of the Lagrange function of \eqref{eqn_q_norm_obj111} and setting it to zero
	gives:
	\begin{eqnarray}
		\frac{\partial}{\partial\balpha}\left(\bar{\balpha}^T\bar{\balpha}
		+ \bm{\beta}^T({\bf y}-{\bf X}\balpha)\right) &=& {\bf 0} \nonumber \\
		\frac{k}{4}\cdot2\balpha\circ\left(\balpha^{\elementwise{2}}+\bm{\epsilon}\right)^{\elementwise{(\frac{k}{4}-1})}\circ
		2\bar{\balpha} - {\bf X}^T\bbeta  &=& {\bf 0} \nonumber \\
		{k}\,\balpha\circ\left(\balpha^{\elementwise{2}}+\bm{\epsilon}\right)^{\elementwise{(\frac{k}{4}-1})}\circ
		\left(\bm{\alpha}^{\elementwise{2}}+\bm{\epsilon}\right)^{\elementwise{\frac{k}{4}}} - {\bf X}^T\bbeta  &=& {\bf 0} \nonumber \\
		\Rightarrow\hspace{5mm}{k}\,\balpha\circ\left(\balpha^{\elementwise{2}}+\bm{\epsilon}\right)^{\elementwise{(\frac{k}{2}-1})} &=& {\bf
			X}^T\bbeta.\hspace{6.8mm}
		\label{eqn_qnorm_alpha110}
	\end{eqnarray}
	For the limiting case of $\bm{\epsilon}$, we have
	\begin{eqnarray}
		\lim_{\bm{\epsilon}\rightarrow{\bf 0}}{k}\,\balpha\circ\left(\balpha^{\elementwise{2}}+\bm{\epsilon}\right)^{\elementwise{(\frac{k}{2}-1})}
		&=& {k}\,\balpha\circ\left(\balpha^{\elementwise{2}}\right)^{\elementwise{(\frac{k}{2}-1})},
	\end{eqnarray}
	which implies 
	\begin{eqnarray}
		{k}\,\balpha\circ\left(\balpha^{\elementwise{2}}\right)^{\elementwise{\frac{k-2}{2}}} &=& {\bf X}^T\bbeta \nonumber \\
		{k}\,\sgn(\balpha)\circ(\balpha^{\elementwise{2}})^{\frac{1}{2}}\circ\left(\balpha^{\elementwise{2}}\right)^{\elementwise{\frac{k-2}{2}}} &=& {\bf X}^T\bm{\beta} \nonumber \\
		{k}\,\sgn(\balpha)\circ\left(\balpha^{\elementwise{2}}\right)^{\elementwise{\frac{k-1}{2}}} &=& {\bf X}^T\bm{\beta} \nonumber \\
		&& \hspace{-4.3cm}\Rightarrow\hspace{5mm}\left(\balpha^{\elementwise{2}}\right)^{\elementwise{\frac{k-1}{2}}}\ \ =\ \ \sgn(\balpha)\circ\left(\frac{1}{k}{\bf X}^T\bm{\beta}\right)
		.\label{eqn_alpha_square}
	\end{eqnarray}
	Taking square elementwise for both sides of \eqref{eqn_alpha_square}, we have
	\begin{eqnarray}
		\left(\bm{\alpha}^{\elementwise{2}}\right)^{\elementwise{(k-1)}} &=& \left(\frac{1}{k}{\bf X}^T\bm{\beta}\right)^{\elementwise{2}}
		.
	\end{eqnarray}
	We know that the vector
	$\lim_{\bm{\epsilon}\rightarrow{\bf 0}}(\balpha^{\elementwise{2}}+\bm{\epsilon})$ has
	nonnegative elements and thus $\lim_{\bm{\epsilon}\rightarrow{\bf
			0}}\left(\balpha^{\elementwise{2}}+\bm{\epsilon}\right)^{\elementwise{(\frac{k}{2}-1)}}$
	has nonnegative elements. Hence, we deduce from \eqref{eqn_qnorm_alpha110} that 
	$\sgn(\balpha)=\sgn({\bf X}^T\bm{\beta})$, and
	\begin{eqnarray}
		\balpha &=&
		\sgn({\bf X}^T\bbeta)\circ
		\left|{\bf X}^T\left\{\frac{1}{k}\bm{\beta}\right\}\right|^{\elementwise{\frac{1}{k-1}}}
		.
		\label{eqn_qnorm_alpha111}
	\end{eqnarray}
	Next, suppose that
	\begin{eqnarray}
		\sgn({\bf X}^T\bbeta)\circ
		\left|{\bf X}^T\left\{\frac{1}{k}\bm{\beta}\right\}\right|^{\elementwise{\frac{1}{k-1}}}
		&=&  \Xpower  \bm{\gamma}
		\label{eqn_qnorm_alpha_gamma}
	\end{eqnarray}
	for some $\bm{\gamma}$, then premultiply ${\bf X}$ to both sides of \eqref{eqn_qnorm_alpha111} gives
	\begin{eqnarray}
		{\bf X}\balpha &=& {\bf X}\,\sgn({\bf X}^T\bbeta)\circ
		\left|{\bf X}^T\left\{\frac{1}{k}\bm{\beta}\right\}\right|^{\elementwise{\frac{1}{k-1}}} \nonumber\\
		\Rightarrow
		{\bf X}\balpha &=& {\bf X}\Xpower\bm{\gamma},\ \ {\rm according\ to\ \eqref{eqn_qnorm_alpha_gamma}} \nonumber\\
		\Rightarrow
		{\bf y} &=& {\bf X}\Xpower\bm{\gamma},\ \ {\rm since\ }{\bf y}={\bf X}\balpha \nonumber\\
		\Rightarrow 
		\bm{\gamma} &=& \left[{\bf X}\Xpower\right]^{-1} {\bf y},\ 	\ {\rm since\ } \left[{\bf X}\Xpower\right]^{-1}
		{\rm is\ invertible} .
		\label{eqn_qnorm_gamma1}
	\end{eqnarray}
	Knowing also that ${\bf X}{\bf X}^T$ is invertible, we substitute \eqref{eqn_qnorm_gamma1} into \eqref{eqn_qnorm_alpha_gamma} and get
	\begin{eqnarray}
		\sgn({\bf X}^T\bbeta)\circ
		\left|{\bf X}^T\left\{\frac{1}{k}\bm{\beta}\right\}\right|^{\elementwise{\frac{1}{k-1}}}
		&=& \Xpower\left[{\bf X}\Xpower\right]^{-1} {\bf y} \nonumber\\
		{\bf X}^T\left\{\frac{1}{k}\bbeta\right\}
		&=& \sgn({\bf X}^T\bbeta)\circ
		\left|\Xpower\left[{\bf X}\Xpower\right]^{-1} {\bf y} \right|^{\elementwise{(k-1)}}\nonumber\\
		{\bf X}{\bf X}^T\left\{\frac{1}{k}\bbeta\right\}
		&=& \sgn({\bf X}^T\bbeta)\circ 
		{\bf X}\left|\Xpower\left[{\bf X}\Xpower\right]^{-1} {\bf y} \right|^{\elementwise{(k-1)}}\nonumber\\
		\left\{\frac{1}{k}\bbeta\right\}
		&=& \sgn({\bf X}^T\bbeta)\circ 
		\left({\bf X}{\bf X}^T\right)^{-1}{\bf X}\left|\Xpower\left[{\bf X}\Xpower\right]^{-1} {\bf y} \right|^{\elementwise{(k-1)}}
		\!\!\!\!.
		\label{eqn_qnorm_alpha_gamma2}
	\end{eqnarray}
	Subsequently, substitute \eqref{eqn_qnorm_alpha_gamma2} into \eqref{eqn_qnorm_alpha111} and we have
	\begin{eqnarray}
		\hat{\balpha} &=&
		\sgn({\bf X}^T\bbeta)\circ
		\left|{\bf X}^T\left({\bf X}{\bf X}^T\right)^{-1}{\bf X}\left(\Xpower\left[{\bf X}\Xpower\right]^{-1} {\bf y} \right)^{\elementwise{(k-1)}}\right|^{\elementwise{\frac{1}{k-1}}} \nonumber\\
		&=&
		\sgn\left(\btheta\right)\circ
		\left|{\bf X}^T\left({\bf X}{\bf X}^T\right)^{-1}{\bf X}\btheta^{\elementwise{(k-1)}}\right|^{\elementwise{\frac{1}{k-1}}} 
		,
		\label{eqn_qnorm_alpha333}
	\end{eqnarray}
	where  
	\begin{eqnarray}
		\btheta
		&=& \Xpower\left[{\bf X}\Xpower\right]^{-1} {\bf y} .
		\label{eqn_qnorm_gamma2}
	\end{eqnarray}
	The sign of $\sgn({\bf X}^T\bbeta)=\sgn\left(\btheta\right)$ has been deduced from the top row of \eqref{eqn_qnorm_alpha_gamma2}. Equations~\eqref{eqn_qnorm_alpha333}-\eqref{eqn_qnorm_gamma2} hold well without singularity for all $\btheta\in\Real^D$ and all ${\bf X}\in\Real^{M\times D}$ for $k>1$.
	Finally, since $\kmeask{\balpha}$ is convex according to Lemma~\ref{lemma_convex} and the linear constraint function (${\bf y}={\bf X}\balpha$) is also convex, the Lagrange function of \eqref{eqn_q_norm_obj111} is convex. Hence the minimizer.
	
\end{proof}

\noindent{\bf Remark 2:}

\indent In \cite{Toh96}, a simpler version of global solution was conjectured for a weighted least norm regression. However, Theorem~\ref{thm2} reveals that such a solution cannot be any simpler. From application viewpoint, although the validity of $k$ stretches beyond 2 in Theorem~\ref{thm1} and in Theorem~\ref{thm2}, the region of interest for parametric shrinking is $k<2$. We shall thus focus on $k\in[1,2]$ for over-determined systems and $k\in(1,2]$ for under-determined systems in our development, both included the well-known non-compressive $\ell_2$-norm for benchmarking purpose. 
\hspace*{\fill}\Box

\subsection{Extension to Multiple Outputs} \label{sec_multiclass}

The above results can be extended for regression with multiple outputs. Particularly, by utilizing the same regressor matrix (${\bf X}$) with different outputs (${\bf y}_l$, $l=1,...,C$), the solution can be stacked for concurrent prediction. For example, suppose $\hat{\bm{A}}$ stacks the multiple columns of estimated coefficient vectors $[\hat{\balpha}_1,...,\hat{\balpha}_C]\in\Real^{D\times C}$, then the prediction can be computed as $\hat{\bf Y}={\bf X}\hat{\bm{A}}$. For classification applications, an one-hot encoding can be adopted for learning and the winner-take-all technique can be used to predict the outcome from the multi-category responses. 
As the outputs do not depend on each other, the extensions for the primal and the dual forms are straightforward.

\begin{thm}\label{thm3}
		Given the data $\{ \bm{x}_i, {y}_{i,l}\}$, $i=1,...,M,\ l=1,...,C$ where $ \bm{x}_i=[x_{i,1},\cdots,x_{i,D}]^T$ and ${y}_{i,l}$ are respectively the regressors and the response for the $i$th observation of the $l$th output. Consider the linear regression model ${\bf X}\bm{A}$ with parameter matrix $\bm{A}=[\balpha_1,...,\balpha_C]\in\Real^{D\times C}$ and regressor matrix ${\bf X}=[\bm{x}_1,\cdots,\bm{x}_M]^T$. Suppose ${\bf X}^T{\bf X}$ is of full rank. Then, under the limiting case of ${\bm{\epsilon}_l\rightarrow{\bf 0}}$ and for $k\geq 1$, $\hat{\balpha}_1,...,\hat{\balpha}_C$ that satisfies
		\begin{eqnarray} 
			\balpha_l
			&=& \left(\frac{\lambda{k}}{2}\,\diag\{|\balpha_l|^{\elementwise{(k-2)}}\} + {\bf X}^T{\bf X}\right)^{-1}{\bf X}^T{\bf y}_l,\ \ \ l = 1,...,C,
			\label{eqn_qnorm_alpha_solution_thm3}
		\end{eqnarray}
		minimizes 
		\begin{equation}\label{eqn_q_norm_obj888_thm3}
			({\bf y}_l-{\bf X}\balpha_l)^T({\bf y}_l-{\bf X} \balpha_l) + \lambda\kmeask{\balpha_l} ,
		\end{equation}
		for each target-parameter pair $\{ {\bf y}_l,\balpha_l\}$, $l = 1,...,C$ when the matrix $(\frac{\lambda{k}}{2}\,\diag\{|\balpha_l|^{\elementwise{(k-2)}}\} + {\bf X}^T{\bf X})$ is invertible. This happens for sure as soon as 
		\begin{equation}\label{eqn_nonsingular_reg_thm3}
			\frac{\lambda{k}}{2} \max_{j\in\{1,...,D\}} (|\alpha_{j,l}|^{(k-2)})   < \min_{j\in\{1,...,D\}} ({\rm eig}_j ({\bf X}^T{\bf X})).
		\end{equation}
\end{thm}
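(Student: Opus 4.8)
The plan is to reduce Theorem~\ref{thm3} to $C$ independent instances of Theorem~\ref{thm1}. The first observation is that the stacked objective is fully separable across the output index $l$: the residual term $({\bf y}_l-{\bf X}\balpha_l)^T({\bf y}_l-{\bf X}\balpha_l)$ depends only on the $l$th coefficient column $\balpha_l$, and likewise the penalty $\lambda\kmeask{\balpha_l}$ depends only on $\balpha_l$. Although the regressor matrix ${\bf X}$ is shared by all outputs, no coupling between the columns of $\bm{A}$ is created, because no cross term involving $\balpha_l$ and $\balpha_{l'}$ with $l\neq l'$ appears in the cost. Hence the total cost $\sum_{l=1}^{C}\big[({\bf y}_l-{\bf X}\balpha_l)^T({\bf y}_l-{\bf X}\balpha_l)+\lambda\kmeask{\balpha_l}\big]$ is minimized if and only if each summand \eqref{eqn_q_norm_obj888_thm3} is minimized over its own $\balpha_l$, which is precisely the "for each target-parameter pair" statement of the theorem.

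Next I would apply Theorem~\ref{thm1} verbatim to each pair $\{{\bf y}_l,\balpha_l\}$. With the substitutions ${\bf y}\leftarrow{\bf y}_l$, $\balpha\leftarrow\balpha_l$, $\bm{\epsilon}\leftarrow\bm{\epsilon}_l$, the objective \eqref{eqn_q_norm_obj888_thm3} becomes exactly \eqref{eqn_q_norm_obj888}, so the stationarity computation carried out in the proof of Theorem~\ref{thm1} (differentiate with respect to $\balpha_l$, pass to the limit $\bm{\epsilon}_l\rightarrow{\bf 0}$, and rearrange) yields \eqref{eqn_qnorm_alpha_solution_thm3} as the fixed-point equation, valid whenever $\big(\frac{\lambda k}{2}\diag\{|\balpha_l|^{\elementwise{(k-2)}}\}+{\bf X}^T{\bf X}\big)$ is invertible. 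The sufficient condition for this invertibility — the Neumann-series argument bounding $\|-({\bf X}^T{\bf X})^{-1}{\bf A}_l\|<1$ through the spectral norm of the symmetric matrix ${\bf A}_l=\frac{\lambda k}{2}\diag\{|\balpha_l|^{\elementwise{(k-2)}}\}$ — is again identical to that proof, and produces \eqref{eqn_nonsingular_reg_thm3} with $\alpha_j$ replaced by $\alpha_{j,l}$. The positivity of the eigenvalues of ${\bf X}^T{\bf X}$ (full rank) again removes the need for absolute values on the right-hand side.

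Finally, for each fixed $l$ the residual $({\bf y}_l-{\bf X}\balpha_l)^T({\bf y}_l-{\bf X}\balpha_l)$ is convex in $\balpha_l$ (an affine map followed by the convex quadratic $\mathbf{v}\mapsto\mathbf{v}^T\mathbf{v}$) and $\kmeask{\balpha_l}$ is convex by Lemma~\ref{lemma_convex} for $k\geq 1$; their sum \eqref{eqn_q_norm_obj888_thm3} is therefore convex, so the stationary point \eqref{eqn_qnorm_alpha_solution_thm3} is in fact a global minimizer. Since this holds for every $l=1,\dots,C$ and the subproblems decouple, $(\hat\balpha_1,\dots,\hat\balpha_C)$ jointly minimizes the stacked objective.

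\textbf{Main obstacle.} There is essentially no substantive obstacle beyond cleanly recording the separability: the mathematical content is entirely inherited from Theorem~\ref{thm1}, and the extension only requires the remark that sharing ${\bf X}$ across outputs introduces no interaction term in the cost. The one point worth a sentence of comment is that here the regularization parameter $\lambda$ and the exponent $k$ are taken common across outputs (consistent with the paper's interpretation of a "common covariance for output alignment"), although the decoupling argument would go through unchanged even with output-dependent $\lambda_l$ or $k_l$.
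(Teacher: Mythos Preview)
Your proposal is correct and follows exactly the paper's approach: the paper's own proof is a one-liner observing that the $C$ outputs are independent so the regression for each $\balpha_l$ can be performed separately, reducing the claim to $C$ applications of Theorem~\ref{thm1}. Your write-up simply makes this separability and the appeal to Theorem~\ref{thm1} (including the Neumann-series invertibility condition and the convexity via Lemma~\ref{lemma_convex}) explicit.
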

\begin{proof}
		Since the $l=1,...,C$ outputs are independent of each other, the regression for estimating each $\balpha_l$, $l=1,...,C$ can be performed independently. Hence the result.
\end{proof}
\begin{thm}\label{thm4}
		Consider under-determined systems ${\bf y}_l={\bf X}\bm{\alpha}_l$, $l=1,...,C$, where ${\bf
			y}_l\in\Real^M$ is the given target vector for each output, ${\bf X}\in\Real^{M\times D}$ is the regressor matrix and $\bm{\alpha}\in\Real^D$ is the parameter vector, with number of samples $M<D$ regressor dimensions.	
		Assume ${\bf X}{\bf X}^T$ and ${\bf X}\Xpower$ are of full rank for certain $k>1$. Then for that $k>1$ and under the limiting case of ${\bm{\epsilon}_l\rightarrow{\bf 0}}$, the stationary point given by
		\begin{eqnarray}
			\hat{\balpha}_l &=&
			\sgn\left(\btheta_l\right)\circ 
			\left|{\bf X}^T\left({\bf X}{\bf X}^T\right)^{-1}{\bf X}\btheta_l^{\elementwise{(k-1)}}\right|^{\elementwise{\frac{1}{k-1}}}  ,
			\label{eqn_qnorm_alpha_solution_dual_thm4}
		\end{eqnarray}
		where
		\begin{eqnarray}
			\btheta_l
			&=& \Xpower\left[{\bf X}\Xpower\right]^{-1} {\bf y}_l ,
			\label{eqn_qnorm_alpha_theta_thm4}
		\end{eqnarray}
		minimizes
		\begin{equation}\label{eqn_q_norm_obj111_thm4}
			\kmeask{\balpha_l}
			\ \,{\rm subject\ to\ \,} {\bf y}_l={\bf X}\balpha_l,\ \ \ l=1,...,C .
		\end{equation}
\end{thm}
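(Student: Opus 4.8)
The plan is to reduce Theorem~\ref{thm4} to $C$ independent applications of Theorem~\ref{thm2}, exactly mirroring how Theorem~\ref{thm3} was obtained from Theorem~\ref{thm1}. The key observation is that the optimization problem \eqref{eqn_q_norm_obj111_thm4} is completely separable over the output index $l$: the objective $\kmeask{\balpha_l}$ and the constraint ${\bf y}_l={\bf X}\balpha_l$ each involve only the single column $\balpha_l$, and the columns $\balpha_1,\dots,\balpha_C$ are not coupled by any shared term. Hence jointly minimizing the collection of objectives subject to the collection of constraints is equivalent to solving, for each fixed $l\in\{1,\dots,C\}$, the scalar-target problem of minimizing $\kmeask{\balpha_l}$ subject to ${\bf y}_l={\bf X}\balpha_l$.

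Next I would verify that every one of these $C$ subproblems meets the hypotheses of Theorem~\ref{thm2}. The regressor matrix ${\bf X}\in\Real^{M\times D}$ and the condition $M<D$ are common to all outputs, and the full-rank requirements on ${\bf X}{\bf X}^T$ and on ${\bf X}\Xpower$ are stated for the same ${\bf X}$ and the same $k>1$; they therefore hold uniformly in $l$. The only datum that changes with $l$ is the target vector ${\bf y}_l\in\Real^M$, which plays the role of ${\bf y}$ in Theorem~\ref{thm2}. Applying Theorem~\ref{thm2} with ${\bf y}:={\bf y}_l$ then yields that the stationary point of the $l$th Lagrangian is
\begin{gather*}
	\hat{\balpha}_l = \sgn\left(\btheta_l\right)\circ \left|{\bf X}^T\left({\bf X}{\bf X}^T\right)^{-1}{\bf X}\btheta_l^{\elementwise{(k-1)}}\right|^{\elementwise{\frac{1}{k-1}}}, \\
	\btheta_l = \Xpower\left[{\bf X}\Xpower\right]^{-1} {\bf y}_l ,
\end{gather*}
which is precisely \eqref{eqn_qnorm_alpha_solution_dual_thm4}--\eqref{eqn_qnorm_alpha_theta_thm4}. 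Convexity of each subproblem (convex objective by Lemma~\ref{lemma_convex}, linear constraint) was already established inside the proof of Theorem~\ref{thm2}, so each such stationary point is in fact the minimizer of its subproblem. Collecting the $C$ solutions as the columns of $\hat{\bm A}=[\hat{\balpha}_1,\dots,\hat{\balpha}_C]$ completes the argument.

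Since the problem genuinely decouples, there is no substantive obstacle; the single point that warrants a line of care is confirming that the nonsingularity assumptions of Theorem~\ref{thm2} are conditions on ${\bf X}$ and $k$ alone and do not covertly depend on the target, so that the one pair of hypotheses in the statement of Theorem~\ref{thm4} legitimately licenses invoking Theorem~\ref{thm2} for all $l$ simultaneously.
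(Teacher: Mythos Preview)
Your proposal is correct and follows essentially the same approach as the paper: the paper's proof simply notes that the $C$ outputs do not depend on each other, so each $\balpha_l$ can be estimated independently by Theorem~\ref{thm2}. Your version is more explicit about the decoupling and about checking that the rank hypotheses depend only on ${\bf X}$ and $k$, but the underlying idea is identical.
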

\begin{proof}
		Since the $l=1,...,C$ outputs do not depend on each other, the regression for estimating each $\balpha_l$, $l=1,...,C$ can be performed independently. Hence the result.
\end{proof}

\subsection{Variance Analysis}

	In this subsection, we analyze the variance of the estimate and observe the essential properties.
	We shall work on the single output case only since the multiple outputs case is a direct stacking of the single output case. Assume the data is generated according to ${\bf y} = {\bf X}\balpha + \bm{\epsilon}$ with ${\bf X}\in\Real^{M\times D}$, $\balpha\in\Real^D$ where $\bm{\epsilon}$ is a zero mean noise with covariance matric ${\bf C}$. For the over-determined case, we have $M\geq D$. Suppose the estimation is initialized by $\balpha_0$, then the expectation of $\hat{\balpha}$ is
	\begin{eqnarray}
		E[\hat{\balpha}] 
		&=& E\left[\left(\frac{\lambda{k}}{2}\,\diag\{|\balpha_0|^{\elementwise{(k-2)}}\} + {\bf X}^T{\bf X}\right)^{-1}{\bf X}^T({\bf X}\balpha + \bm{\epsilon})\right] \nonumber \\
		&=& \left(\frac{\lambda{k}}{2}\,\diag\{|\balpha_0|^{\elementwise{(k-2)}}\} + {\bf X}^T{\bf X}\right)^{-1}{\bf X}^T{\bf X}\balpha\nonumber \\
		&\neq& \balpha, \ \ \forall k>1,\lambda>0.
	\end{eqnarray}
	This shows that the estimation is biased for the ranges of our working $k$ values and $\lambda$ values. For the special case when $\lambda=0$, we have an unbiased ols estimation since $E[\hat{\balpha}] = E\left[\left({\bf X}^T{\bf X}\right)^{-1}{\bf X}^T({\bf X}\balpha + \bm{\epsilon})\right] = E\left[\left({\bf X}^T{\bf X}\right)^{-1}{\bf X}^T {\bf X}\balpha \right] = \balpha$.

	For the under-determined system where $M<D$, the expectation of estimate is
	{\small
		\begin{eqnarray}
			E[\hat{\balpha}] 
			&=& E\left[\sgn\left(\btheta\right)\circ 
			\left|{\bf X}^T\left({\bf X}{\bf X}^T\right)^{-1}{\bf X}\btheta^{\elementwise{(k-1)}}\right|^{\elementwise{\frac{1}{k-1}}}\right],\
			\btheta
			=\Xpower\left[{\bf X}\Xpower\right]^{-1} ({\bf X}\balpha + \bm{\epsilon}) , \nonumber \\
			&\neq& \balpha.
		\end{eqnarray}
	}
	The inequality holds because the rank of
	\[ \left|{\bf X}^T\left({\bf X}{\bf X}^T\right)^{-1}{\bf X}\left\{\Xpower\left[{\bf X}\Xpower\right]^{-1} {\bf X}\balpha \right\}^{\elementwise{(k-1)}}\right|^{\elementwise{\frac{1}{k-1}}} \]
	is at most $M$ (due to $\left({\bf X}{\bf X}^T\right)^{-1}$) which is smaller than $D$. The above analyses for the over- and the under-determined cases show that both the estimates are biased, and this is consistent with the compressed estimation where some ideal parameters have been suppressed.

	For variance of the over-determined case, we have
	\begin{eqnarray}
		E[(\hat{\balpha} - E[\hat{\balpha}])( \hat{\balpha} - E[\hat{\balpha}])^T]
		&=& E\left\{\left[\left(\frac{\lambda{k}}{2}\,\diag\{|\balpha_0|^{\elementwise{(k-2)}}\} + {\bf X}^T{\bf X}\right)^{-1}{\bf X}^T\bm{\epsilon}\right] \right.\nonumber\\
		&& \hspace{5mm}\times\left.\left[\left(\frac{\lambda{k}}{2}\,\diag\{|\balpha_0|^{\elementwise{(k-2)}}\} + {\bf X}^T{\bf X}\right)^{-1}{\bf X}^T\bm{\epsilon}\right]^T \right\}
		\nonumber \\
		&=& \left[\left(\frac{\lambda{k}}{2}\,\diag\{|\balpha_0|^{\elementwise{(k-2)}}\} + {\bf X}^T{\bf X}\right)^{-1}{\bf X}^T\right] E[\bm{\epsilon}\bm{\epsilon}^T] \nonumber\\
		&& \hspace{5mm}\times\left[\left(\frac{\lambda{k}}{2}\,\diag\{|\balpha_0|^{\elementwise{(k-2)}}\} + {\bf X}^T{\bf X}\right)^{-1}{\bf X}^T\right]^T 
		\nonumber \\
		&=& \left[\left(\frac{\lambda{k}}{2}\,\diag\{|\balpha_0|^{\elementwise{(k-2)}}\} + {\bf X}^T{\bf X}\right)^{-1}{\bf X}^T\right] {\bf C} \nonumber\\
		&& \hspace{5mm}\times\left[\left(\frac{\lambda{k}}{2}\,\diag\{|\balpha_0|^{\elementwise{(k-2)}}\} + {\bf X}^T{\bf X}\right)^{-1}{\bf X}^T\right]^T .
	\end{eqnarray} 
	When $k=2$, it reduces to ridge regression where standard analysis applies. For other $k$ values, the situation becomes complicated.

	For the under-determined case, it is difficult to simplify $E[(\hat{\balpha} - E[\hat{\balpha}])( \hat{\balpha} - E[\hat{\balpha}])^T]$ due to the nonlinearity incurred by the absolute exponent.
	However, when $k=2$, it also reduces to the minimum norm solution case where standard analysis applies. Again, for other $k$ values, the situation becomes complicated.  


\subsection{Algorithm}\label{sec_algorithm} 

The algorithm for the proposed proximal bridge regression can be readily implemented in Python as shown in the function codes below. Here, both the over- and under-determined cases have been implemented according to the shape of the regression matrix. For numerical stability under practical considerations, all the inverse terms have included regularization. 
\\

		%
		%

\noindent{\bf Python function:}\\*[-6mm]
{\scriptsize
\begin{verbatim} 
import numpy as np
from numpy.linalg import inv 
		
def pbridge(X,Y,k,Lambda):
    # alpha: computed regression coefficient
    # X: data input matrix of size mxd, m is sample size, d is feature dimension
    # Y: output target vector/matrix of size mx1/mxq, q is output dimension
    # k: k-norm value in (1,2]
    # lambda: constrain coefficient in [0, positive integer] 
    if X.shape[0] < X.shape[1]: #under-determined
        if k==2:
            Xwarp = X.T
        else:
            Xwarp = np.absolute(X.T)**(1/(k-1))
        LamID =  Lambda * np.identity(X.shape[0])
        theta = np.array(Xwarp @ inv(X@Xwarp + LamID) @ Y, dtype=np.complex)
        alpha = theta**(k-1)
        alpha = X @ alpha
        alpha = inv(X@X.T + LamID) @ alpha
        alpha = X.T @ alpha
        alpha = (np.absolute(alpha))**(1/(k-1))
        alpha = np.sign(theta)*alpha
    else:  #over-determined
        Xcov = X.T@X
        pseudoinv = inv(Xcov + Lambda*np.identity(X.shape[1])) @ X.T
        alpha = pseudoinv @ Y
        if k<2:
            Lamfactor = Lambda*k*0.5
            for kk in range(0,4):
                A = np.array((np.absolute(alpha[:,0]+10**-10)**(k-2))*Lamfactor)
                pseudoinv = inv(Xcov + np.diag(A)) @ X.T
                alpha = pseudoinv @ Y
    return alpha.real
\end{verbatim}
}

\section{Case Studies} 

In this section, we perform some empirical studies under the Matlab platform on a real-world data set and several simulated benchmark data sets namely, the prostate cancer data, the Exclusive-OR (XOR) problem, and four simulated examples. Our goal here is to observe the behavior of the proposed method comparing with state-of-the-arts. For the prostate cancer data and the XOR problem, both the coefficient profiles and the estimation results will be observed. For the four simulated examples, the estimation results are obtained through Monte Carlo experiments to observe their statistical behaviors.


As a representative example for over-determined systems, the prostate cancer data from \cite{Stamey1}, which has been used by \cite{Hastie1,Tibshirani1,Zou1} as a benchmark example, is adopted to learn a linear regression model based on 67 training samples. This data set has a continuous response with 8 input variables. These inputs together with an intercept term give rise to 9 estimation coefficients for the linear regression model and this forms an over-determined regression system. 
On the other hand, as a representative example for under-determined systems, a 3rd-order polynomial model given by
\begin{equation} \label{eqn_poly_3rd_order}
	p(x_1,x_2) = \alpha_0 + \alpha_1x_1 + \alpha_2x_2 + \alpha_3x_1^2 + \alpha_4x_2^2 + \alpha_5x_1x_2 + \alpha_6x_1^3 + \alpha_7x_2^3 + \alpha_8x_1^2x_2 + \alpha_9x_1x_2^2,
\end{equation}
is deployed to learn the well-known XOR problem with four training data samples. The inputs to the XOR problem are $(x_1,x_2)\in\{(0,1),(2,1),(1,0),(1,2)\}$ with their corresponding target outputs given by $y\in\{0,0,1,1\}$, respectively. The system formed by learning the XOR data using the polynomial model constitutes an under-determined system since the number of parametric coefficients ($\alpha_0,...,\alpha_9$) is larger than the four learning samples. A total of 200 test samples for this XOR problem has been generated for mapping evaluation. These test samples have been generated by a bivariate Gaussian random number generator with centers located at the four training points, each center corresponds to 50 samples with an identity covariance matrix scaled by 0.3. The responses of these test data follow the labels of the four centers respectively.

\subsection{Over-determined system: prostate cancer example}

\noindent{\bf Coefficient Profile: } The profile of each coefficient estimate is observed with respect to variation of shrinkage settings following \cite{Hastie1,Tibshirani1,Zou1}. We shall first show the coefficient profile for the well-known ridge regression according to \cite{Hastie1} for immediate reference. Fig.~\ref{fig_profile_ridge_overdetermined} shows the ridge coefficient estimates plotted as function of $df(\lambda) = {\rm tr}[{\bf X}({\bf X}^T{\bf X}+\lambda{\bf I}){\bf X}^T]$, the \textit{effective degrees of freedom} implied by the penalty $\lambda$ (see \cite{Hastie1}, Section 3.4).

\begin{figure}[hhh]
	\begin{center}
		\epsfxsize=5.8cm
		\epsffile[7     4   630   523]{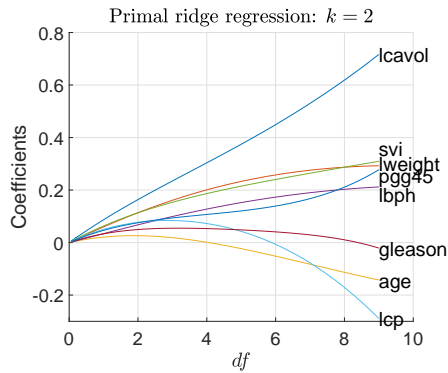}
		\caption{Profiles of ridge coefficients for the over-determined system: prostate cancer example.}
		\label{fig_profile_ridge_overdetermined}
	\end{center}
\end{figure}

As one of our interests here is to check the compression capability of the primal $p$-bridge regression in Theorem~\ref{thm1}, we shall observe the coefficient estimation at $k=1$ and compare it with that of the well-known the least absolute shrinkage and selection operator (lasso) \cite{Tibshirani1} (see \cite{Matlab} for the library codes in Matlab). 
Fig.~\ref{fig_profile_bridge_lasso_overdetermined} shows the coefficient profile of primal $p$-bridge regression at $k=1$ and that of lasso. This plot shows much resemblance of shrinkage behaviors between $p$-bridge and lasso for several coefficients, particularly for that of `lcp', `age' and `gleason' which shrunk to zero with a similar order of sequence when lowering the $df$ (raising the $\lambda$) value. However, for $p$-bridge regression, the coefficients of `pgg45', `lweight', `svi', and `lcavol' do not appear to reach zero sequentially as that of lasso. Moreover, for $p$-bridge regression, the coefficient of `pgg45' does not terminate together with that of `lbph' as in the case of lasso.

\begin{figure}[hhh]
	\begin{center}
		\begin{tabular}{cc}
			\epsfxsize=5.8cm
			\epsfysize=5.0cm
			\epsffile[7     4   629   522]{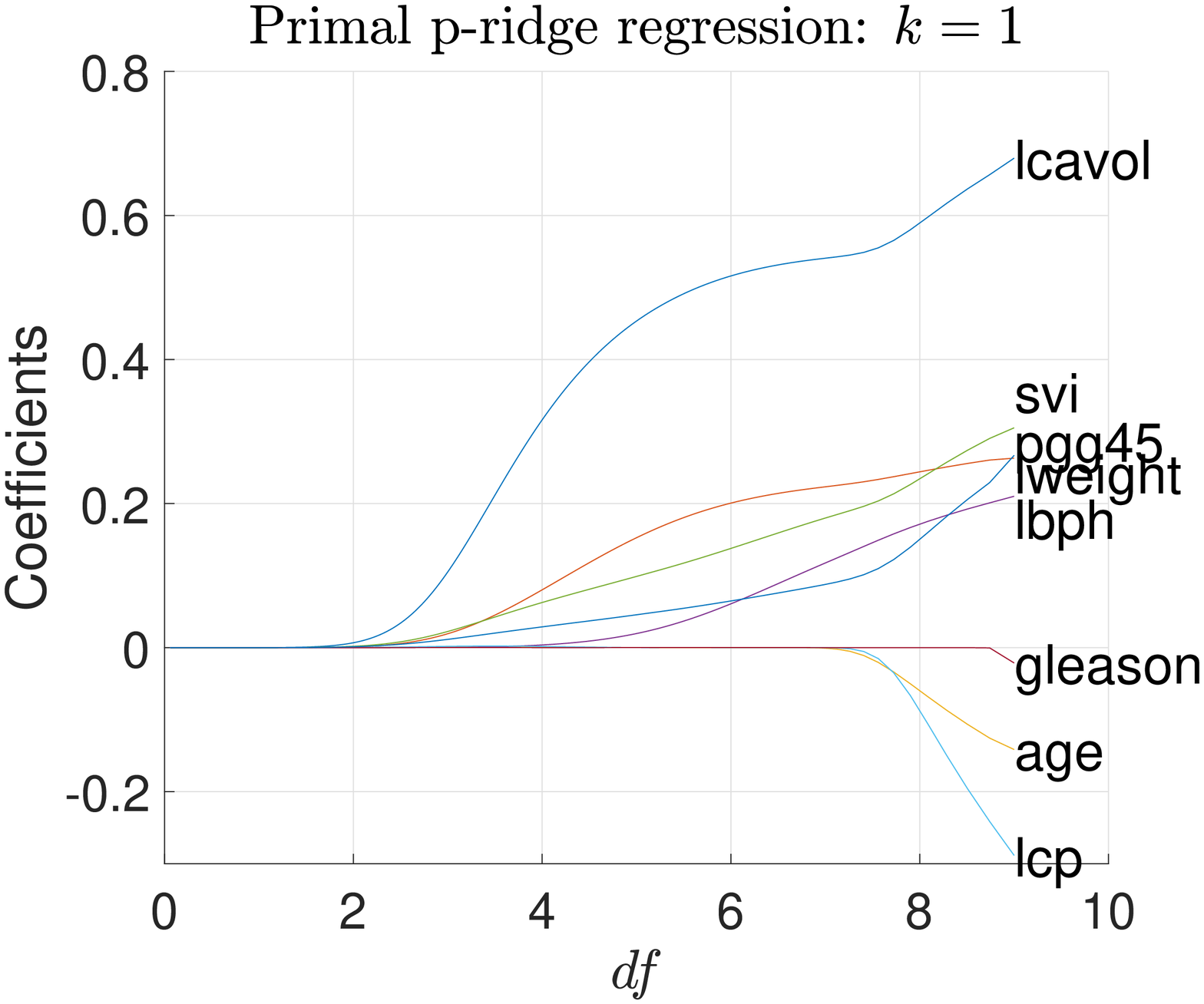}& \hspace{8mm}
			\epsfxsize=5.8cm
			\epsfysize=5.0cm
			\epsffile[56    24   641   470]{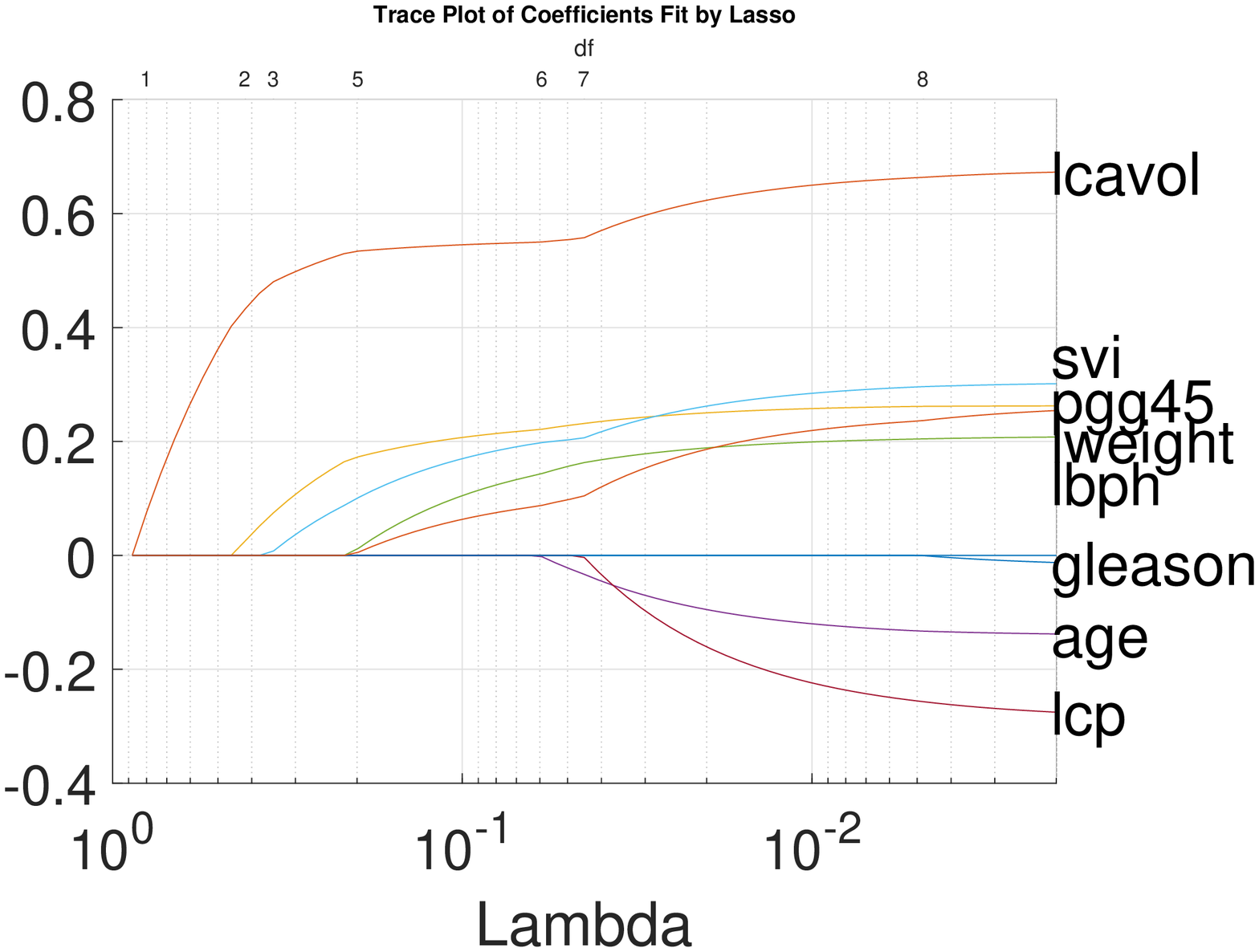}
			\\ \hspace{1cm} (a) primal $p$-bridge at $k=1$ & \hspace{2cm} (b) lasso
		\end{tabular}
		\caption{Profiles of primal $p$-bridge (at $k=1$) and lasso coefficients for the over-determined system: prostate cancer example.}
		\label{fig_profile_bridge_lasso_overdetermined}
	\end{center}
\end{figure}

Different from the lasso which is hinged upon the $\ell_1$-norm by varying only the `\texttt{lambda}' value of the Matlab library function, the $p$-bridge regression has an additional degree of freedom to vary the coefficient shrinkage (i.e., both $\lambda$ and $k$ are adjustable). 
Fig.~\ref{fig_profile_var_k_L1_L50_overdetermined} shows the variation of coefficient estimates over $k$ values at $\lambda=5$ and at $\lambda=50$. These plots show a high impact of $k$ values on coefficient shrinkage when $\lambda$ is large.\\
\begin{figure}[hhh]
	\begin{center}
		\begin{tabular}{cc}
			\epsfxsize=5.8cm
			\epsffile[7    10   642   522]{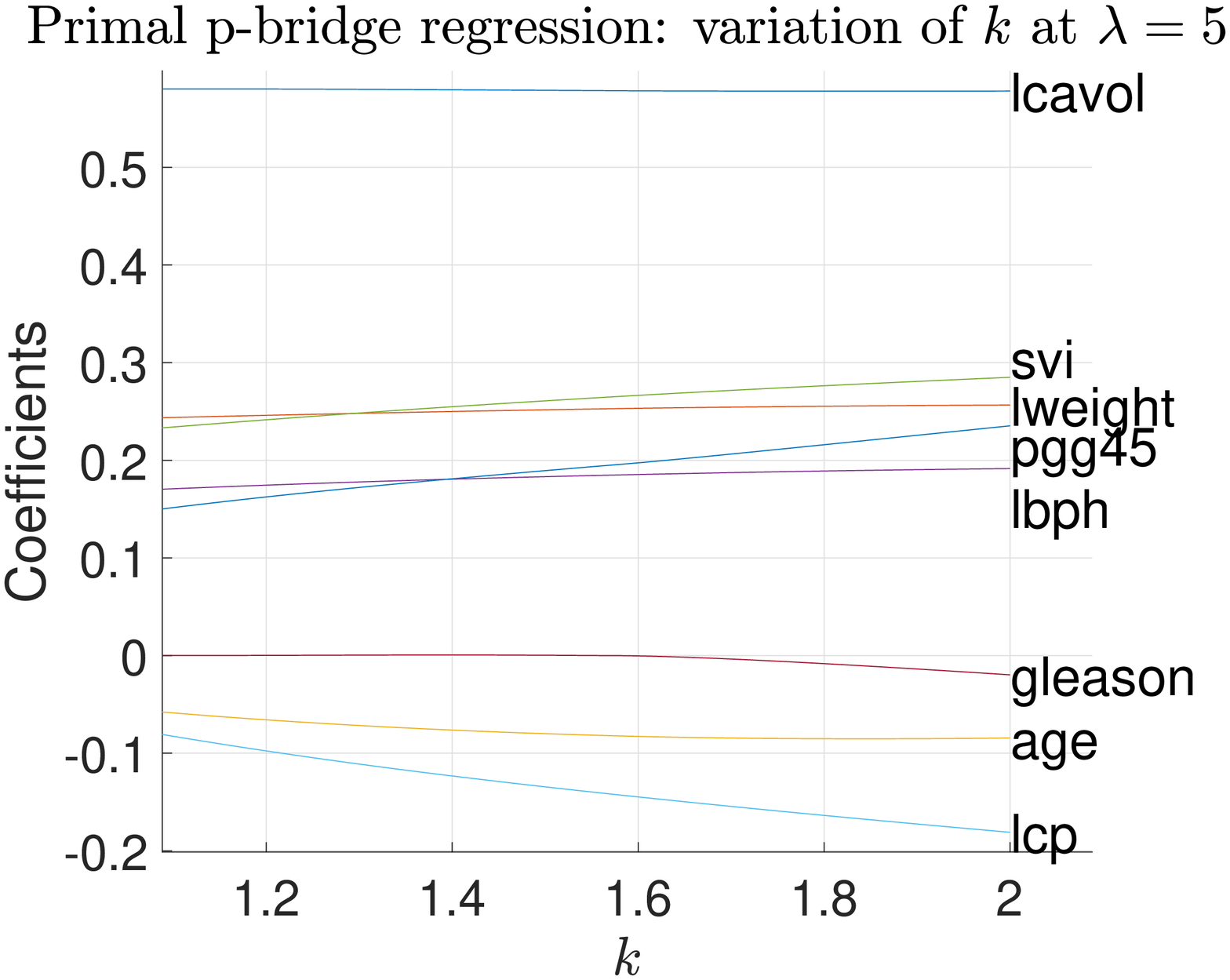}&
			\hspace{8mm}
			\epsfxsize=5.8cm
			\epsffile[7    10   642   522]{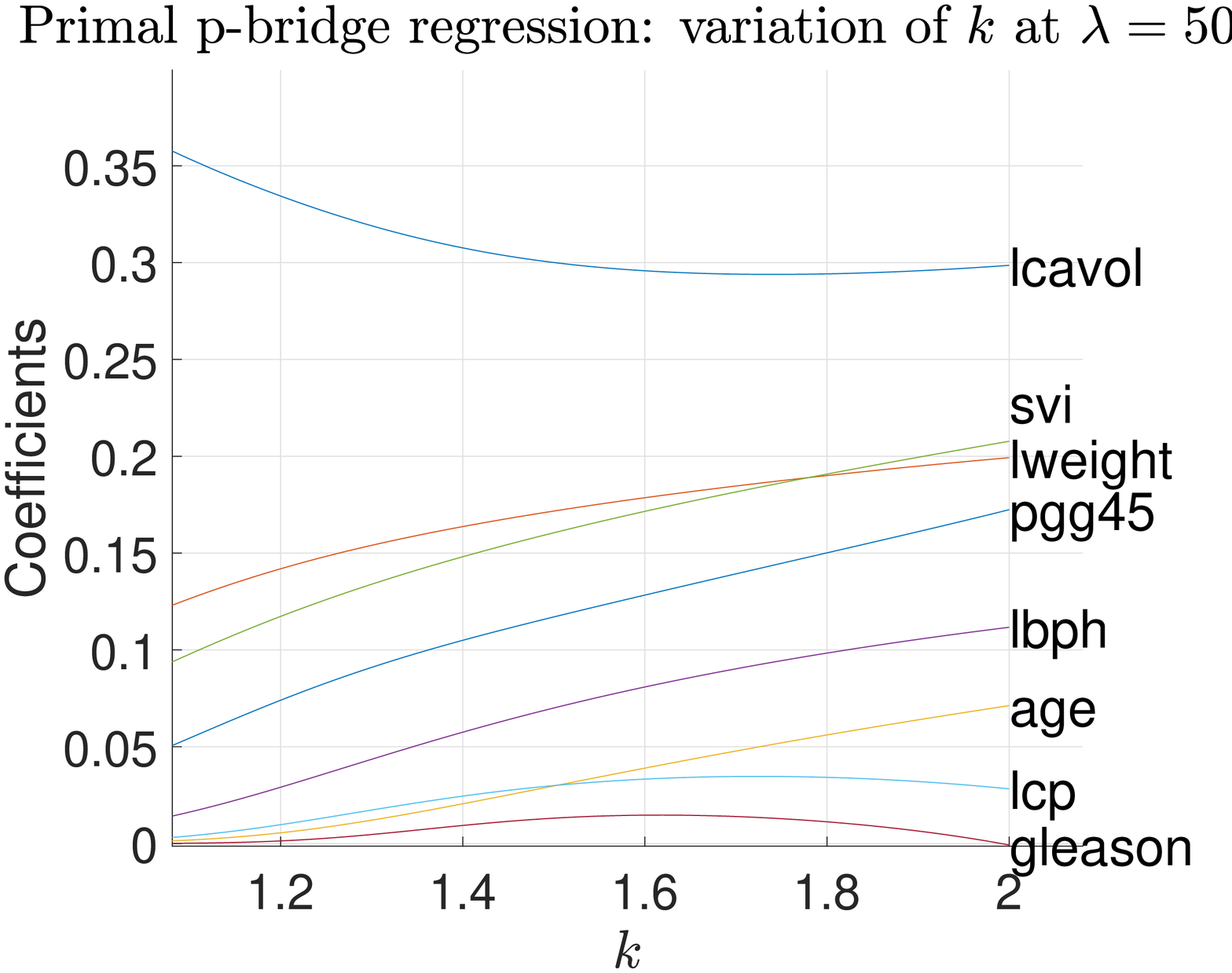}
			\\ \hspace{1cm} (a) $\lambda=5$ & \hspace{2cm} (b) $\lambda=50$
		\end{tabular}
		\caption{Profiles of primal $p$-bridge coefficients (by variation of $k$ values at $\lambda=5$ and $\lambda=50$) for the over-determined system: prostate cancer example.}
		\label{fig_profile_var_k_L1_L50_overdetermined}
	\end{center}
\end{figure}

\noindent{\bf Prediction results: } The prediction results of $p$-bridge is compared with several state-of-art methods namely, the ordinary least squares regression (ols) \cite{Duda1,Hastie1}, the ridge regression (ridge) \cite{Duda1,Hastie1}, the lasso \cite{Tibshirani1}, and the elastic-net \cite{Zou1}. Table~\ref{table_prostate_MSE} shows the results of the best chosen models obtained from tenfold cross-validation based on the 67 training observations. The mean-squared errors (MSE) between the estimated output and the measured output of the compared methods are reported based on the 30 test samples. These results show comparable performance of the primal $p$-bridge with those of the well-known state-of-the-arts. The estimated coefficients are shown in Table~\ref{table_prostate_coeff} for each method. Here we note that the chosen models are not sparse in favor of the cross-validated MSE.

\begin{table}[hhh]
	\caption{Prostate example: test mean squared error (MSE) and variables (excluding the intercept) selected}
	\label{table_prostate_MSE}
	\begin{center}\scriptsize
		\begin{tabular}{|c|c|c|c|}\hline
			Method  		   				& Tuned Parameter(s) 	  					  	& Test MSE 			& Variables selected \\ \hline
			ols     		   				& --  		  								  	& 0.520 (0.174) 	& All  \\ \hline
			ridge regression   				& $\lambda=1$								  	& 0.516 (0.175) 	& All \\ \hline
			lasso at ($\texttt{Alpha}=1$)	& $\texttt{Lambda}=0.02$    					& 0.483 (0.160)  	& (1,2,3,4,5,6,8) \\ \hline
			elastic-net        				& $\texttt{Lambda}=0.06$, $\texttt{Alpha}=0.11$	& 0.492 (0.164) 	& (1,2,3,4,5,6,8) \\ \hline
			$p$-bridge regression at ($k=1$)	& $\lambda=2$						  			& 0.494 (0.167)		& (1,2,3,4,5,6,8) \\ \hline
			$p$-bridge regression  				& $\lambda=2$, $k=1$						  	& 0.494 (0.167)  	& (1,2,3,4,5,6,8) \\ \hline
		\end{tabular}
	\end{center}
\end{table}

\begin{table}[hhh]
	\caption{Prostate example: estimated coefficients for the chosen setting based cross-validation on the training set}
	\label{table_prostate_coeff}
	\begin{center}\scriptsize
		\begin{tabular}{|l|r|r|r|r|r|r|}\hline
			Predictor	& ols		& ridge		& lasso 	& elastic-net 	& $p$-bridge at $k=1$   & $p$-bridge	\\ \hline
			0. intcpt   &  2.452	&  2.416 	&  2.467	&  2.466 		&  2.452			&  2.452 	\\ \hline
			1. lcavol	&  0.716	&  0.690 	&  0.624	&  0.590		&  0.637			&  0.637	\\ \hline
			2. lweight 	&  0.293	&  0.292	&  0.250 	&  0.254 		&  0.256 			&  0.256	\\ \hline
			3. age	    & -0.143	& -0.135	& -0.095	& -0.102		& -0.106 			& -0.106	\\ \hline
			4. lbph		&  0.212	&  0.210	&  0.189	&  0.197		&  0.193			&  0.193	\\ \hline
			5. svi		&  0.310	&  0.304	&  0.262	&  0.274		&  0.274			&  0.274 	\\ \hline
			6. lcp		& -0.289 	& -0.256	& -0.161	& -0.157		& -0.196			& -0.196 	\\ \hline
			7. gleason 	& -0.021	& -0.011	&  	0		&  	0			& -0.000			& -0.000	\\ \hline
			8. pgg45	&  0.277	&  0.258	&  0.186	&  0.199		&  0.206			&  0.206	\\ \hline
		\end{tabular}
	\end{center}
\end{table}


\subsection{Under-determined system: the XOR problem}

\noindent{\bf Coefficient Profile: }
The under-determined formulation in Theorem~\ref{thm2} ($p$-bridge regression in dual form) is applied to learn the XOR problem for coefficient profiling since there are more coefficients than training samples. 
Fig.~\ref{fig_profile_lasso_underdetermined} shows the coefficient profiles of dual $p$-bridge and lasso. This plot shows a highly sparse estimation for $p$-bridge at low $k$-value ($k=1.05$) comparing with lasso. Particularly, when the shrinkage penalty is low (at small $\lambda$ value), $p$-bridge suppresses all coefficients except those of $x_1^3$ and $x_2^3$ whereas lasso emphasizes the coefficients of $x_1$ and $x_2$ more than that of $x_1^3$ and $x_2^3$ while suppressing all other coefficients.
Fig.~\ref{fig_profile_bridge_var_k_underdetermined}(a) shows that for the range $1.05\leq k \leq 1.1$, $p$-bridge suppresses most of the coefficients except for the coefficients of $x_1^3$ and $x_2^3$ even at $\lambda=0$. The effect of sparseness begins to vanish after $k>1.1$ as seen from Fig.~\ref{fig_profile_bridge_var_k_underdetermined}(b).
The decision boundaries for both the dual $p$-bridge regression and the lasso in Fig.~\ref{fig_contour_bridge_lasso_underdetermined} show much resemblance in view of the high contribution of the coefficients of $x_1^3$ and $x_2^3$.  \\
\begin{figure}[hhh]
	\begin{center}\vspace{3mm}
		\begin{tabular}{cc}
			\epsfxsize=5.8cm
			\epsfysize=5.0cm
			\epsffile[32     4   806   520]{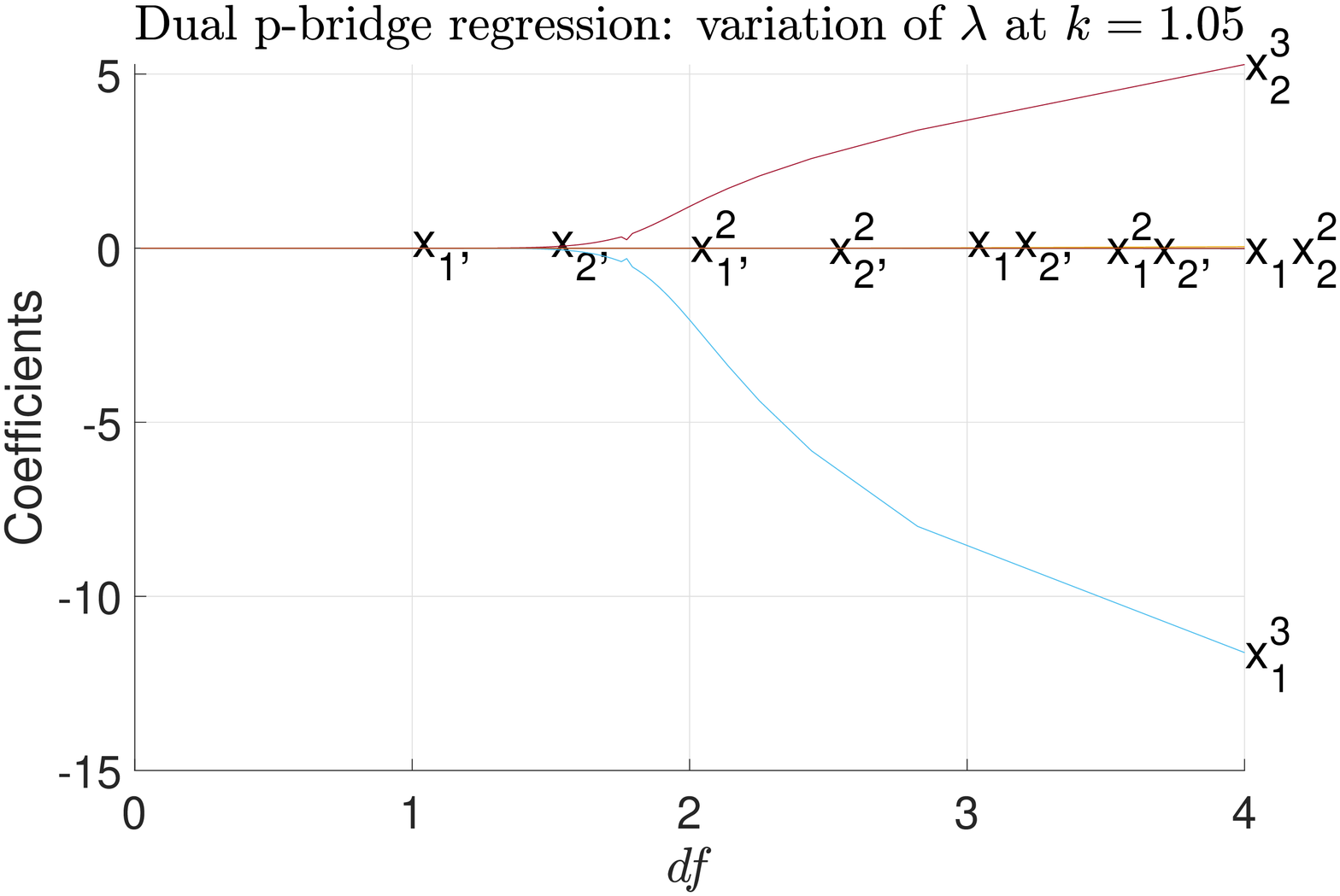}& \hspace{8mm}
			\epsfxsize=5.8cm
			\epsfysize=5.0cm
			\epsffile[88    24   733   453]{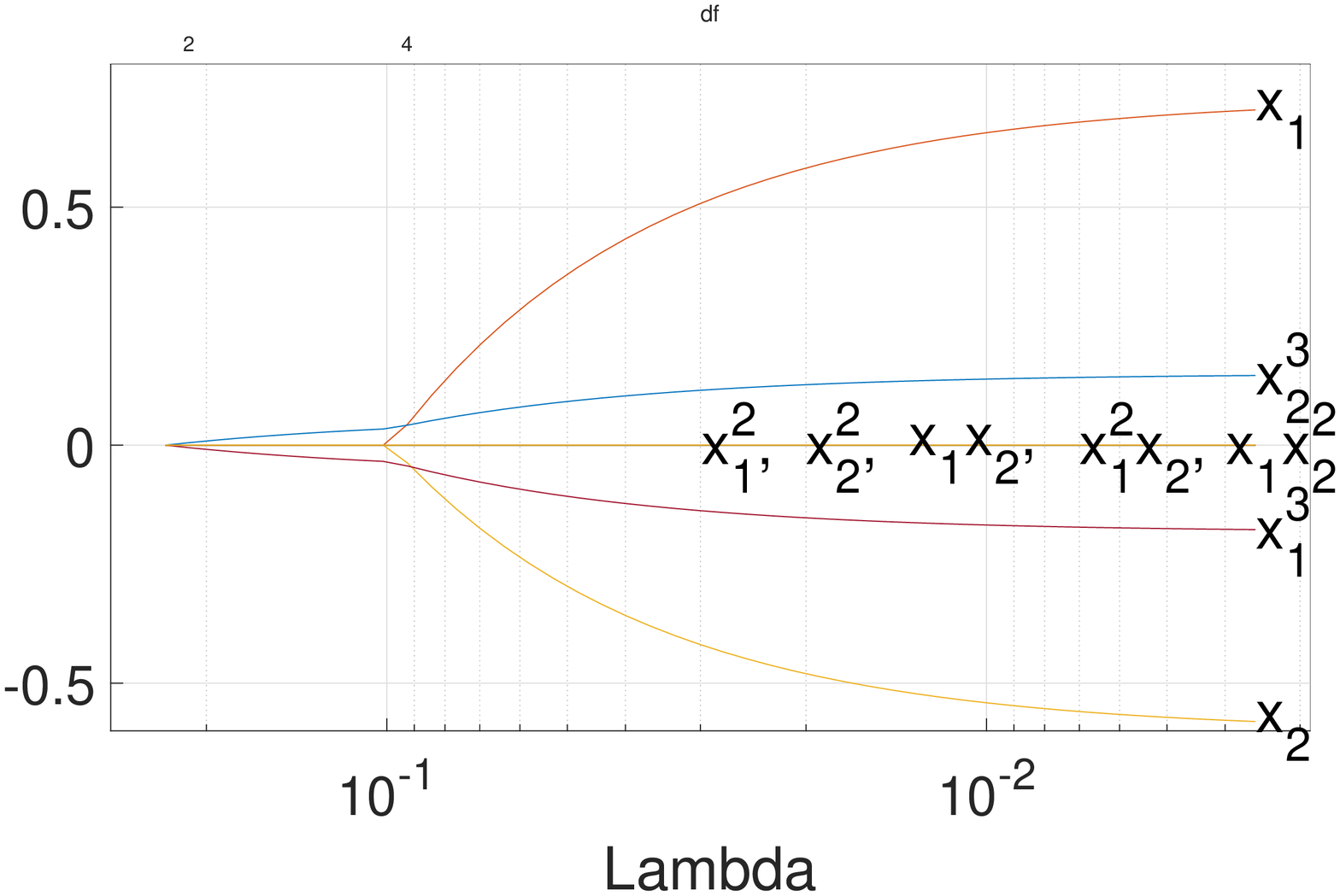}
			\\ \hspace{1cm} (a) dual $p$-bridge & \hspace{2cm} (b) lasso
		\end{tabular}
		\caption{Profiles of dual $p$-bridge and lasso coefficients for the under-determined XOR problem.}
		\label{fig_profile_lasso_underdetermined}
	\end{center}
\end{figure}

\begin{figure}[hhh]
	\begin{center}
		\begin{tabular}{cc}
			\epsfxsize=5.8cm
			\epsfysize=5.0cm
			\epsffile[32     9   816   522]{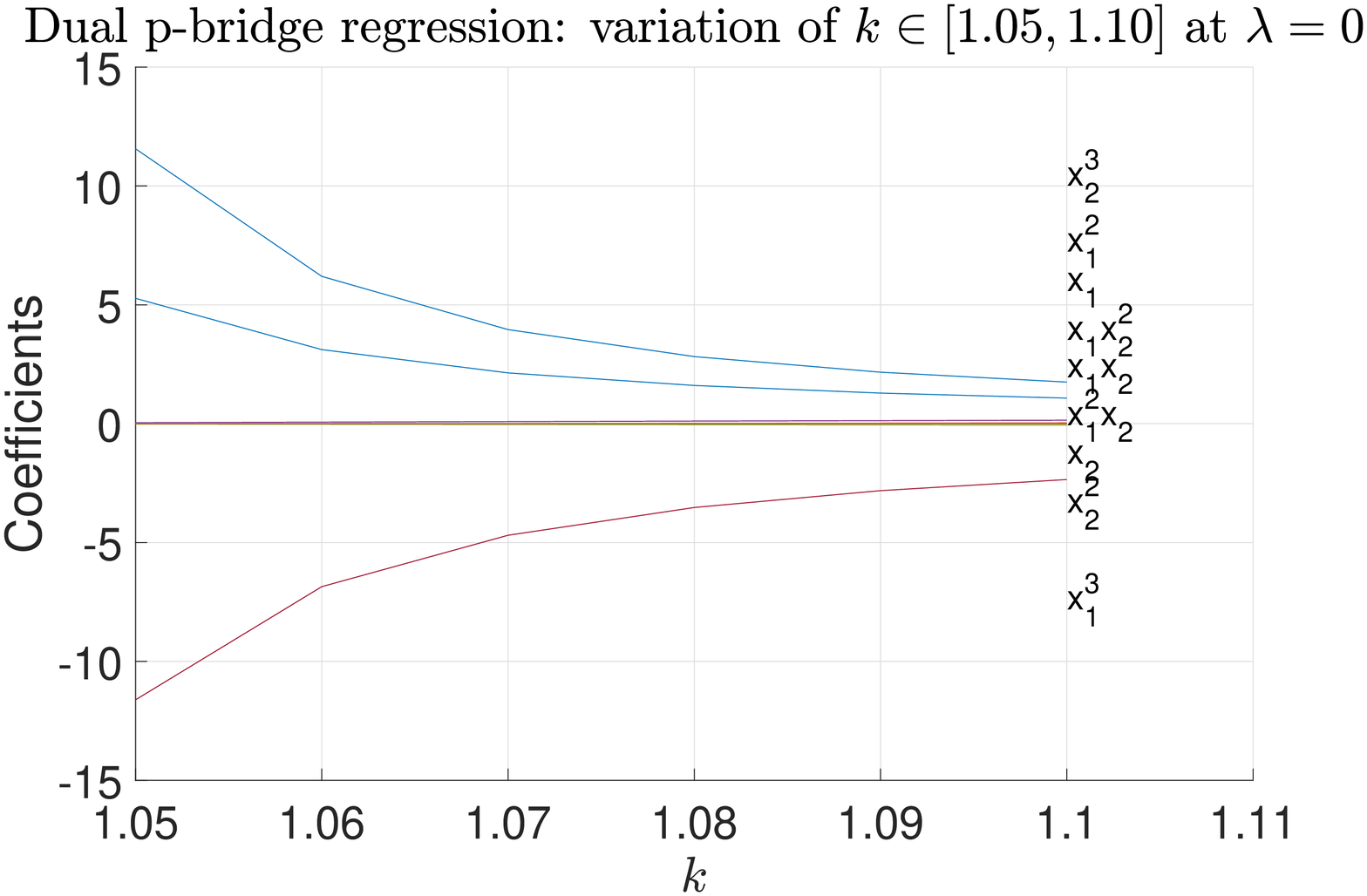}& \hspace{8mm}
			\epsfxsize=5.8cm
			\epsfysize=5.0cm
			\epsffile[37    10   810   526]{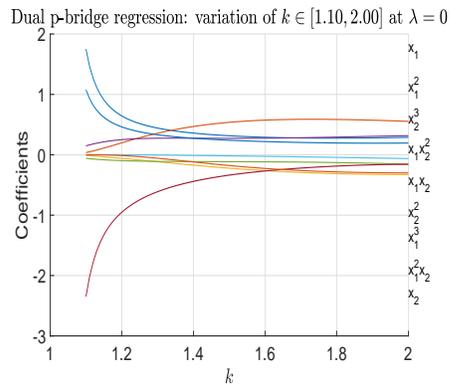}
			\\ \hspace{1cm} (a) & \hspace{2cm} (b)
		\end{tabular}
		\caption{Profiles of dual $p$-bridge coefficients (by variation of $k$ values at $\lambda=0$) for the under-determined XOR problem.}
		\label{fig_profile_bridge_var_k_underdetermined}
	\end{center}
\end{figure}

\begin{figure}[hhh]
	\begin{center}
		\begin{tabular}{cc}
			\epsfxsize=5.8cm
			\epsffile[50    26   481   372]{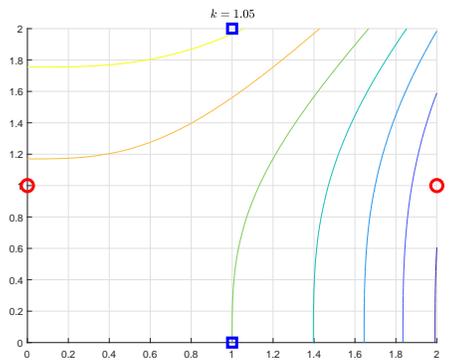}& \hspace{8mm}
			\epsfxsize=5.8cm
			\epsffile[36    19   388   298]{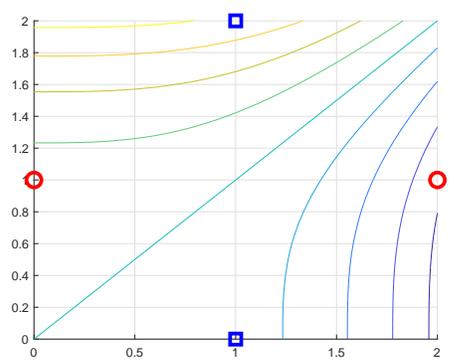}
			\\ \hspace{1cm} (a) dual $p$-bridge at $k=1.05$ & \hspace{2cm} (b) lasso
		\end{tabular}
		\caption{Decision contours of dual $p$-bridge and lasso.}
		\label{fig_contour_bridge_lasso_underdetermined}
	\end{center}
\end{figure}

\noindent{\bf Prediction results: }  Table~\ref{table_XOR_MSE} shows the results of chosen models based on training the four observations. The mean-squared errors between the estimated and the generated outputs of the compared methods are reported based on the 200 test observations. These results show comparable mapping performance of the dual $p$-bridge with that of the state-of-the-arts. The estimated coefficients as seen from Table~\ref{table_XOR_coeff} show sparseness for lasso at \texttt{Lambda} = 0.1 (\texttt{Alpha} = 1) and dual $p$-bridge at $k=1.05$ and $\lambda=30$.

\begin{table}[hhh]
	\caption{XOR: test mean squared error (MSE) and variables ($\alpha_0,...,\alpha_9$) selected based on the four training points}
	\label{table_XOR_MSE}
	\begin{center}\scriptsize
		\begin{tabular}{|c|c|c|c|}\hline
			Method  		   			  	& Tuned Parameter(s) 						& Test MSE 			& Variables selected 	\\ \hline	
			ols     		   			  	& --  		  								& 0.513 (0.089) 	& All 					\\ \hline
			ridge regression   			  	& $\lambda=6$								& 0.503 (0.047) 	& (0,1,2,3,4,6,7,8,9) 	\\ \hline
			lasso at ($\texttt{Alpha}=1$ and $\texttt{Lambda}=0.1$) 	& --			& 0.225 (0.011)  	& (0,6,7) 				\\ \hline
			lasso at ($\texttt{Alpha}=1$) 	& $\texttt{Lambda}=0$   					& 0.799 (0.189)  	& (0,1,2,3,4,6,7,8,9) 	\\ \hline
			elastic-net        			  	& $\texttt{Lambda}=0$, $\texttt{Alpha}=0.01$& 0.799 (0.189) 	& (0,1,2,3,4,6,7,8,9) 	\\ \hline
			$p$-bridge regression at ($k=1.05$) & $\lambda=30$ 								& 0.504 (0.040)  	& (6,7) 				\\ \hline
			$p$-bridge regression  			  	& $\lambda=0$, $k=2$ 						& 0.513 (0.089)  	& All 					\\ \hline
		\end{tabular}
	\end{center}
\end{table}

\begin{table}[hhh]
	\caption{XOR: estimated coefficients for the chosen setting based on the four training points (parameters within parenthesis have been prefixed, parameters without parenthesis have been determined based on training MSE)}
	\label{table_XOR_coeff}
	\begin{center}\scriptsize
		\begin{tabular}{|l|r|r|r|r|r|r|}\hline
			\multicolumn{1}{|c|}{} & \multicolumn{4}{c|}{State-of-arts} & \multicolumn{2}{c|}{Proposed $p$-bridge regression} \\ \hline
			Predictor	  & ols		& ridge		& lasso ($\texttt{A}1$, $\texttt{L}0.1$) 	& elastic-net 	& ($k=1.05$) $\lambda=30$ & $k=2$, $\lambda=0$ \\ \hline
			0.  intcpt    &  0.288	&  0.200 	&  0.500	&  0.644 		&  0.000	&  0.288 \\ \hline
			1.  $x_1$     &  0.554	&  0.040 	&  0		&  0.002		&  0.000	&  0.554 \\ \hline
			2.  $x_2$ 	  & -0.329	& -0.040 	&  0		& -0.002		& -0.000	& -0.329 \\ \hline
			3.  $x_1^2$   &  0.316	& -0.038	&  0 	 	&  0.760 		&  0.000	&  0.316 \\ \hline
			4.  $x_2^2$   & -0.154	&  0.038	&  0 		& -0.914		& -0.000	& -0.154 \\ \hline
			5.  $x_1x_2$  & -0.063	& -0.000	&  0 		&  0.000		&  0.000	& -0.063 \\ \hline
			6.  $x_1^3$	  & -0.159	& -0.071	& -0.034	&  0.406		& -0.050	& -0.159 \\ \hline
			7.  $x_2^3$	  &  0.195 	&  0.071	&  0.034	&  0.272		&  0.054	&  0.195 \\ \hline
			8.  $x_1^2x_2$& -0.301	& -0.051	&  0	 	& -0.179		& -0.000	& -0.301 \\ \hline
			9. $x_1x_2^2$&  0.111	&  0.051	&  0		&  0.460		&  0.000	&  0.111 \\ \hline
		\end{tabular}
	\end{center}
\end{table}

\clearpage

\subsection{Simulation examples}

The simulated data are generated based on the true model $y = \balpha^T\bm{x} + \sigma\epsilon$ with $\epsilon\sim N(0,1)$
according to several over-determined settings \cite{Tibshirani1,Zou1} below.
\begin{itemize}
	\item Example 1: 50 data sets of 20/20/200 (which denotes respectively the number of observations in the training/validation/test sets) with $\balpha=[3, 1.5, 0, 0, 2, 0, 0, 0]^T$ and $\sigma=3$. The correlation between $x_i$ and $x_j$ has been set at $0.5^{|i-j|}$.
	\item Example 2: similar to Example 1 except with $\balpha=[0.85, 0.85, 0.85, 0.85, 0.85, 0.85, 0.85, 0.85]^T$.
	\item Example 3: 50 data sets consisting of 100/100/400 samples with 
	\begin{eqnarray}
		\balpha &=& [\underbrace{0,...,0},\underbrace{2,...,2},\underbrace{0,...,0},\underbrace{2,...,2}]^T \nonumber\\ && \hspace{5mm}10\hspace{8mm}10\hspace{8mm}10\hspace{8mm}10 \nonumber
	\end{eqnarray}
	and $\sigma=15$. The correlation between $x_i$ and $x_j$ has been set at $0.5$.
	\item Example 4: 50 data sets consisting of 50/50/400 samples with 
	\begin{eqnarray}
		\balpha &=& [\underbrace{3,...,3},\underbrace{0,...,0}]^T \nonumber\\ && \hspace{5mm}15\hspace{8mm}25 \nonumber
	\end{eqnarray}
	and $\sigma=15$. The predictors ${\bf X}$ has been generated based on
	\begin{eqnarray}
		{\bf x}_i &=& Z_1 + \varepsilon^x_i,\ \ Z_1\sim N(0,1),\ \ i=1,...,5,	\nonumber\\
		{\bf x}_i &=& Z_2 + \varepsilon^x_i,\ \ Z_2\sim N(0,1),\ \ i=6,...,10,	\nonumber\\
		{\bf x}_i &=& Z_3 + \varepsilon^x_i,\ \ Z_3\sim N(0,1),\ \ i=11,...,15,	\nonumber\\
		{\bf x}_i &\sim& N(0,1),\ \ {\bf x}_i\ \textrm{independent\ identically\ distributed},\ \ i=16,...,40.	
	\end{eqnarray}
	where $\varepsilon^x_i$ are i.i.d.'s with $N(0,0.01)$, $i=1,...,15$. 
\end{itemize}

Since the ground truth model parameters are known, the Mean Square Error (MSE) of the estimation is measured based on (see e.g., \cite{FuWJ1}):
\begin{equation} \label{eqn_MSE}
	{\rm MSE} = (\hat{\balpha} - \balpha)^T\frac{{\bf X}^T_t{\bf X}_t}{n}(\hat{\balpha} - \balpha),
\end{equation}
where ${\bf X}_t$ and $n$ are respectively the matrix containing the test samples and the test sample size.

Table~\ref{table_simulated_examples} shows the median of MSEs computed using \eqref{eqn_MSE} based on the 50 trials of simulation. These results have been obtained based on a search of parameters utilizing the training set. The search ranges for elastic-net are \texttt{Lambda} $\in$ [0:0.01:1, 2:1:10, 20:10:100, 200:100:1000, 2000:1000:10000] and \texttt{Alpha} $\in$ [0.01:0.01:1]. The lasso uses the same search range with the elastic-net for \texttt{Lambda} but at \texttt{Alpha} = 1. For $p$-bridge, the search ranges are $\lambda\in$ [0:0.01:1, 2:1:10, 20:10:100, 200:100:1000, 2000:1000:10000] and $k\in$ [1:0.01:2].
Table~\ref{table_simulated_examples_non_zero_coeff} shows the median number of non-zero coefficients for each compared method. These results show the sparseness of $p$-bridge regression at $k=1$ is lower than that of lasso and approaching that of elastic-net in many cases.
These simulations verify the effectiveness of sparse estimation under the over-determined scenario.

\begin{table}[hhht]
	\caption{Median mean-squared errors for the simulated examples based on 50 trials}
	\label{table_simulated_examples}
	\begin{center}\scriptsize
		\begin{tabular}{|c|c|c|c|c|}\hline
			Method  		   & Example 1 		& Example 2 		& Example 3 	  & Example 4 			\\ \hline
			ols     		   & 5.599 (0.573)	& 5.476 (0.531)	 	& 150.293 (8.131) & 1085.833 (112.153) 	\\ \hline
			ridge regression   & 3.864 (0.262)	& 1.819 (0.185)	 	&  22.540 (1.062) & 61.964 (4.254)	   	\\ \hline
			lasso			   & 2.982 (0.328)	& 3.515 (0.504)		&  45.950 (1.585) & 62.121 (4.753)    	\\ \hline
			elastic-net        & 3.020 (0.335)	& 1.842 (0.225)		&  23.999 (0.962) & 62.339 (3.933) 		\\ \hline
			$p$-bridge regression at $k=1$ & 2.756 (0.252) 	& 3.113 (0.456)		&  36.002 (1.387) & 47.543 (4.595) 		\\ \hline
			$p$-bridge regression  		   & 2.761 (0.232) 	& 1.817 (0.175)		&  24.043 (0.976) & 49.162 (4.136) 		\\ \hline
		\end{tabular}
	\end{center}
\end{table}

\begin{table}[hhht]
	\caption{Median number of non-zero coefficients}
	\label{table_simulated_examples_non_zero_coeff}
	\begin{center}\scriptsize
		\begin{tabular}{|c|c|c|c|c|}\hline
			Method  		   			 & Example 1& Example 2 & Example 3 & Example 4 \\ \hline
			ols     		   			 & 	8		& 	8 		&  40  		&  40		\\ \hline
			ridge regression   			 & 	8		& 	8 		&  40		&  40  		\\ \hline
			lasso			   			 & 	5		& 	6		&  20  		&  10 		\\ \hline
			elastic-net       			 & 	6		& 	8		&  40 		&  23		\\ \hline
			$p$-bridge regression at $k=1$ &  7		& 	8		&  36		&  33		\\ \hline
			$p$-bridge regression  		 &  8		& 	8		&  40		&  35		\\ \hline
		\end{tabular}\\
	\end{center}
\end{table}


\section{Experiments}

\subsection{NIPS Data sets}

In this section, we conduct experiments on the NIPS 2003 challenge data sets \cite{NIPS2003} to observe the estimation and prediction behaviors of the proposed $p$-bridge solution on data sets of relatively high dimension. These data sets consist of both under- and over-determined scenarios for binary classification. According to \cite{NIPS2003}, the task of the Arcene data set is to distinguish between cancer and normal patterns based on continuous input mass-spectrometric data. The task of Dexter is to filter texts about ``corporate acquisitions'' based on sparse continuous input variables. The task of Dorothea is to predict which compounds bind to Thrombin based on sparse binary input variables. The task of Gisette is to discriminate between two confusable handwritten digits, namely the digit four and the digit nine, based on sparse continuous input variables. The task of Madelon is to classify random data based on sparse binary input variables. Table~\ref{NIPS_data_sets} summarizes these data sets in terms of their data dimensions and sample sizes for training, validation and testing. 

\begin{table}[hhh]
	\caption{NIPS Feature Selection Challenge Data Sets}\label{NIPS_data_sets}
	\begin{center}
		{\scriptsize
			\begin{tabular}{|c|c|c|r|r|r|r|}
				\hline
				Dataset & Domain & Type & \# Feat. & \# Train & \# Valid. & \# Test \\
				\hline
				Arcene    & Mass spec.        & Dense     &  10000 &  100 &  100 &  700 \\
				Dexter    & Text categ.       & Sparse    &  20000 &  300 &  300 & 2000 \\
				Dorothea  & Drug discov.      & Sp. bin.  & 100000 &  800 &  350 &  800 \\
				Gisette   & Digit recog.      & Dense     &   5000 & 6000 & 1000 & 6500 \\
				Madelon   & Artifical data    & Dense     &    500 & 2000 &  600 & 1800 \\
				\hline
		\end{tabular}}
	\end{center}
\end{table}

\subsection*{(i) Comparison setup and protocol}

Similar to that in the section of case study, the state-of-the-art methods included in this experimental study are the ordinary least squares regression (ols, \cite{Duda1,Hastie1}), the ridge regression (ridge, \cite{Duda1,Hastie1}), the lasso, \cite{Tibshirani1}, and the elastic-net (\cite{Zou1}). The platform for this evaluation has been based on Python 3.9.7 running on an Intel i7 CPU of 2.8GHz with 16GB of RAM. 
In view of the stability in handling both the over- and the under-determined systems, the ols has been implemented based on numpy's pseudoinverse function (\texttt{numpy.linalg.pinv}) for computation of the weight coefficients estimate (i.e., using $\balpha=$ \texttt{pinv(${\bf X}$)}$@{\bf y}$). The ridge regression has utilized \texttt{sklearn.linear\_model.Ridge} function. 
The elastic-net has been implemented using \texttt{sklearn.linear\_model.MultiTaskElasticNet}. 
In this function, the lasso corresponds to setting \texttt{l1\_ratio} $= 1$ (where \texttt{l1\_ratio} is the mixing value that controls the relative balance between $\ell_2$and $\ell_1$ penalties), and the elastic-net corresponds to setting $0 <$ \texttt{l1\_ratio} $< 1$. The parameter \texttt{alpha} in \texttt{MultiTaskElasticNet} controls the overall strength of the penalty term which is composed of the $\ell_1$ and $\ell_2$ penalties mixture. Hence, for lasso the only tuning parameter is the strength of constraint \texttt{alpha} $\geq 0$ (with \texttt{l1\_ratio} fixed at 1) while for elastic-net the tuning parameters are \texttt{alpha} $\geq 0$ and \texttt{l1\_ratio} $\in(0,1)$. Finally, parallel to lasso and elastic-net, we have included two versions of our proposed $p$-bridge in this study. The respective versions are $p$-bridge at $k=1.05$ which tunes only $\lambda$ and $p$-bridge which tunes both $\lambda$ and $k$ value. Here we note that there is fundamental difference between the $k$ value of $p$-bridge, which corresponds to the norm value itself, and \texttt{l1\_ratio} of lasso which mixes between $\ell_1$ and $\ell_2$ norms.

As the test labels of these data sets are not released to the public, we shall use the validation set to test the classification prediction. Except for ols, the hyper-parameters $\lambda$, $k$, \texttt{alpha} and \texttt{l1\_ratio} for the above methods have been determined based on a twofold cross-validation utilizing only the training set. These cross-validated hyper-parameters are subsequently utilized to retrain each method based on the entire training set for test prediction utilizing the unseen validation set. For tuning the hyper-parameters, the utilized search ranges are \texttt{l1\_ratio} $\in$ [0.01, 0.1:0.1:1], \texttt{alpha}, $\lambda\in$ [0:0.1:1, 2:1:10, 20:10:100, 200:100:1000] and $k\in$ [1:0.1:2]. 

\subsection*{(ii) Results and observation}

\noindent\textbf{Accuracy}: The results in terms of the classification prediction accuracy (which is the fraction of samples with their category being correctly predicted) are shown in Fig.~\ref{fig_estimated_Acc}. 
These results show that $p$-bridge with full tuning capability (i.e., with both $\lambda$ and $k$ adjustable) has good prediction accuracy relative to that of the state-of-arts in all data sets while $p$-bridge at $k=1.05$ shows inferior performance for the Arcene and the Dexter data sets. The lasso also shows under-performed predictions for these two data sets of under-determined systems even though these results are better than that of $p$-bridge at $k=1.05$. The elastic-net shows relatively under-performed prediction only for the Dexter data set. Attributed to the utilization of the stable sklearn library and the pseudoinverse implementation, the ridge and the ols are observed to have good prediction accuracy over all data sets. It is observed that for $p$-bridge, the chosen $k$-values for these five data sets based on training validation are respectively 1.7, 1.9, 1.5, 1.9 and 1.05. These results show that the Gaussian prior (at $k=2$) might not give the best fit in each case and $p$-bridge provides the alternatives. The additional degree of freedom provided by tuning the $k$ values on top of the penalty $\lambda$ term plays a part in determining a suitable model in face of data diversity.\\ 
\begin{figure}[hhh]
	\begin{center}
		\epsfxsize=12cm
		\epsfysize=6.0cm
		\epsffile[118    40  1193   547]{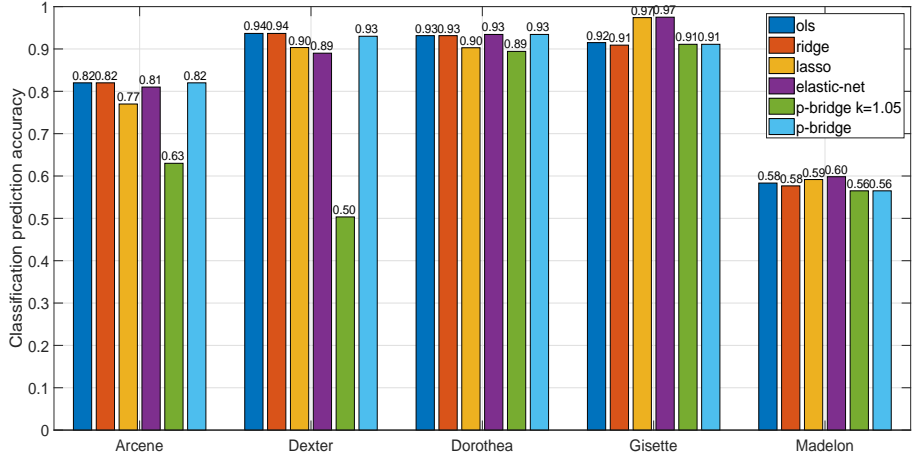}
		\caption{Classification test accuracies.)}
		\label{fig_estimated_Acc}
	\end{center}
\end{figure}

\noindent\textbf{Sparseness of estimated coefficients}: The estimated regression weight coefficients (sorted in ascending order) for each method on each data set are plotted in Fig.~\ref{fig_estimated_coefficients1}. These results show that the ols and the ridge have the densest estimation while the lasso and the elastic-net have the sparest estimation among the compared methods. The $p$-bridge at $k=1.05$ shows sparser estimation than that of the $p$-bridge with tuned $k$ and $\lambda$ values for all data sets. The $p$-bridge at $k=1.05$ show similar sparseness of estimation with lasso only for the Dexter data set, while lasso show the sparest estimation for the remaining data sets. \\

\noindent\textbf{Training processing time}: The training CPU processing time for each method is shown in Table~\ref{table_Training_CPU_times}. Due to the utilization of the computational intensive (but more stable) pseudoinverse in the ols, the fastest training CPU processing time goes to ridge as it has four data sets clocking the lowest processing time. Comparing $p$-bridge and elastic-net, it shows faster processing time in three data sets (Arcene, Dexter and Gisette) but slower processing time in two data sets (Dorthea and Madelon). The trend is similar for comparing between $p$-bridge at $k=1.05$ and lasso.
Here, we note that our implementation of the $p$-bridge has been based on the algorithm shown in section~\ref{sec_algorithm} without optimization of codes.

\begin{table}[hhht]
	\caption{Training CPU time in seconds}
	\label{table_Training_CPU_times}
	\begin{center}\scriptsize
		\begin{tabular}{|c|c|c|c|c|c|}\hline
			Method  			 & Arcene 	 & Dexter	 & Dorothea 	& Gisette 	& Madelon 	\\ \hline
			ols     			 & 	0.125	 &   1.000	 &  24.297	 & 157.578	  &  0.313	   \\ \hline
			ridge    		 	 &{\bf 0.063}&{\bf 0.109}&   3.391	 &{\bf 10.922}&{\bf 0.016} \\ \hline
			lasso				 & 	3.500	 &   1.250	 &{\bf 1.297}& 106.234	  &  0.063	   \\ \hline
			elastic-net   		 & 	2.266	 &   3.625	 &   2.531 	 & 108.594	  &  0.125	   \\ \hline
			$p$-bridge at $k=1.05$ &  0.125	 &   0.969	 &  14.984	 & 100.875	  &  0.313	   \\ \hline
			$p$-bridge		 	 &  0.172	 &   1.047	 &  16.547	 &  97.375	  &  0.313	   \\ \hline
		\end{tabular}\\
	\end{center}
\end{table}


Summarizing the experiments on the NIPS data sets, in terms of the prediction accuracy, the good performance relative to state-of-arts on all data sets shows the capability of the $p$-bridge to balance between the constraint weightage ($\lambda$) and the $k$-value which is related to the underlying $\ell_p$-norm prior. The sparseness of estimation for $p$-bridge is seen to be controlled by the $\lambda$ and $k$ values. For under-determined systems, the accuracy of prediction appears to be much affected by the $k$ values close to 1. In terms of training processing time, the current implementation of $p$-bridge shows faster processing speed than that of elastic-net in three data sets but slower processing speed in two data sets. These results show competing prediction accuracy and processing time with the state-of-the-arts.

\begin{figure}[bbbb]
	\begin{center}
		\begin{tabular}{cc}
			\epsfxsize=6.8cm
			\epsfysize=5.0cm
			\hspace{-2mm}\epsffile[33     6   544   420]{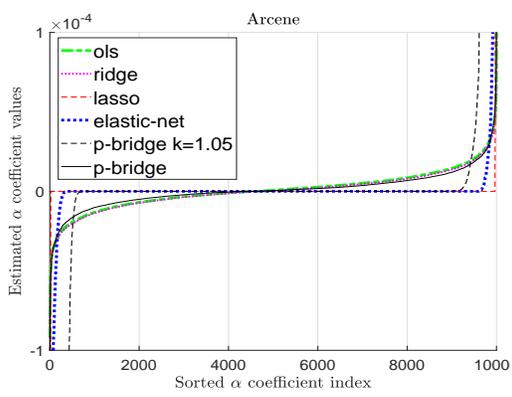} &
			\epsfxsize=6.8cm
			\epsfysize=5.0cm
			\hspace{-2mm}\epsffile[33     6   544   420]{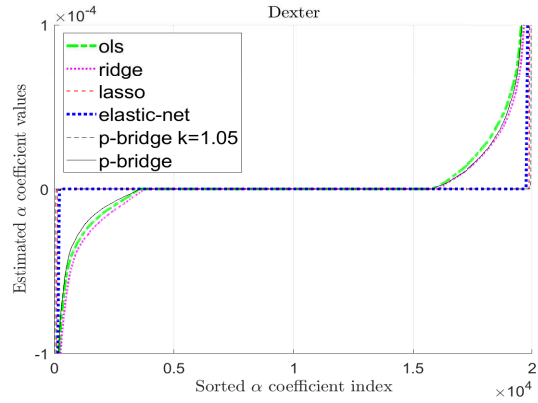}
			\\*[1mm]  \small{(a) Arcene}  & \small{(b) Dexter} \\*[2mm]
			\epsfxsize=6.8cm
			\epsfysize=5.0cm
			\hspace{-2mm}\epsffile[33     6   544   420]{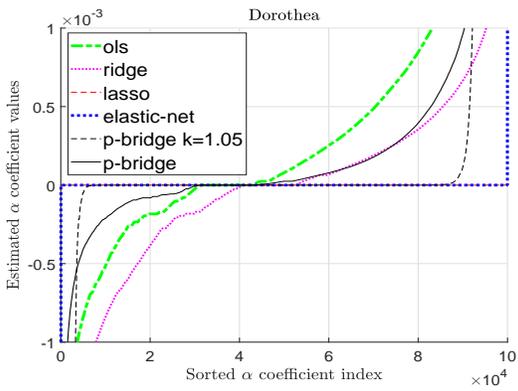} &
			\epsfxsize=6.8cm
			\epsfysize=5.0cm
			\hspace{0mm}\epsffile[33     6   544   420]{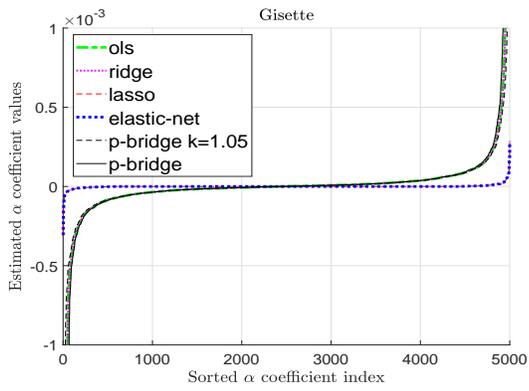}
			\\*[1mm]  \small{(c) Dorothea}  & \small{(d) Gisette}\\*[2mm]
		\end{tabular}
		\epsfxsize=6.8cm
		\epsfysize=5.0cm
		\hspace{-2mm}\epsffile[33     6   544   420]{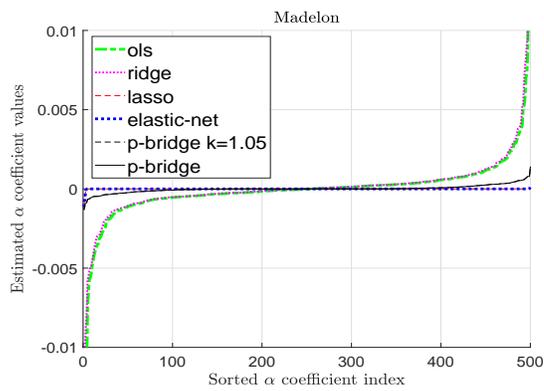} 
		\\*[1mm]  \small{(e) Madelon}
		\caption{Sorted weight coefficients for each data set}
		\label{fig_estimated_coefficients1}
	\end{center}
\end{figure}

\clearpage

\subsection{Recognition of Handwritten Digits} 

The goal of this experiment is to observe the stretching behavior of $p$-bridge for prediction between the $\ell_1$-norm and the $\ell_2$-norm minimization of the parameter vector, particularly for data sets with large regions of empty background as a form of multicollinearity.

The first database is for optical recognition of handwritten digits (abbreviated as Optdigit) where it was collected based on a total of 43 people \cite{kaynak1995methods,UCI1b}. The original 32$\times$32 bitmaps were divided into non-overlapping blocks of 4$\times$4 where the number of on pixels were counted within each block. This generated an input matrix of 8$\times$8 where each element was an integer within the range [0, 16]. The dimensionality is thus reduced from 32$\times$32 to 8$\times$8. Each of the 10 numerical digits constitutes a category for recognition. The total number of 5620 samples are divided equally into two sets for training and testing in our experiment. The left panel of Fig.~\ref{fig_Optdigit} shows some samples of the reduced resolution image taken from the training set (upper two rows) and the testing set (bottom two rows).
The second database is the MNIST data set of handwritten digits \cite{LeCun6,LeCun2} which is a popular benchmark for algorithmic study and experimental comparison. This data set contains a training set of 60,000 samples and a testing set of 10,000 samples where each sample image is of $28\times 28$ pixels resolution. Similar to the Optdigit, the MNIST data set has an output of 10 class labels. The right panel of Fig.~\ref{fig_Optdigit} shows some training (upper two rows) and testing (lower two rows) sample images from the MNIST data set. 


\begin{figure}[hhhh]
	\begin{center}
		\epsfxsize=6cm
		\epsffile[58    39   377   292]{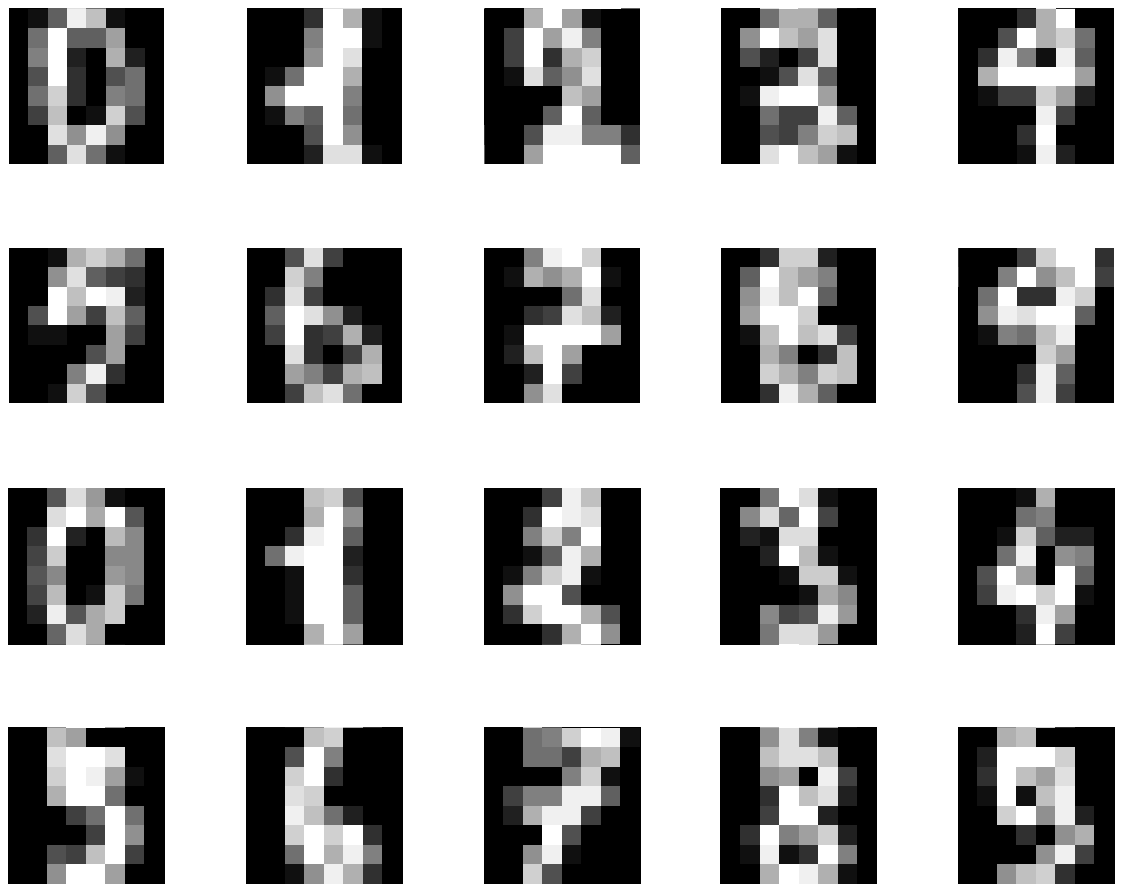} \hspace{1cm}	
		\epsfxsize=6cm
		\epsffile[55    34   380   292]{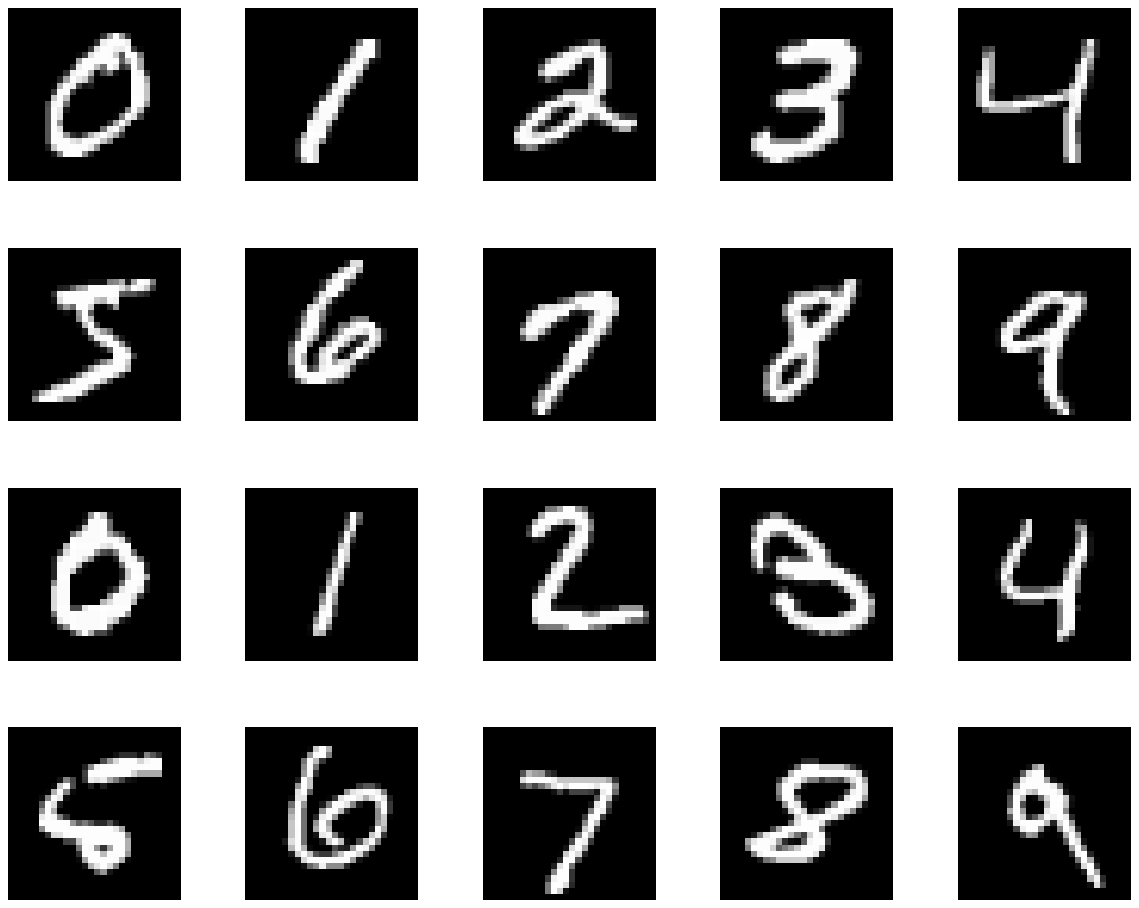} 
		\caption{Samples of Optdigit (left panel) and MNIST (right panel) data sets. Samples shown in the upper two rows are taken from the training set and samples shown in the bottom two rows are taken from the test set.}
		\label{fig_Optdigit}
	\end{center}
\end{figure}


\subsection*{(i) Experimental Setup}

The input images of the two data sets of handwritten digits are mapped to the polynomial space for discrimination beyond linear decision. For the Optdigit data set, each image (8$\times$8 pixels) is pooled at various sizes and reshaped into a row vector 
before expanded by a full polynomial of second order to generate the input features (1$\times$2145 including the bias term) for regression. This data set for Optdigit is divided equally with each of the training and test matrices of 2810$\times$2145 size. For the MNIST data set, a reduced polynomial \cite{Toh39} of third order has been applied to the pooled vector to generate the feature vector (1$\times$22661 including the bias term). The training matrix is of 60,000$\times$22661 size and the test matrix is of 10,000$\times $22661 size for MNIST. 

Both the over-determined and the under-determined settings of $p$-bridge will be evaluated.
For the over-determined setting, the entire training matrix is utilized for training. For the under-determined setting, only 10 samples (1 sample for each of the 10 digits) are utilized for training for both the databases. This is known as \textit{one-shot learning} in the community of computer vision. For both the over- and under-determined cases, the entire test set is utilized for evaluation.

\subsection*{(ii) Results and Observation}

{\bf Over-determined case:} Fig.~\ref{fig_acc_overdetermined} shows the test accuracies of the over-determined $p$-bridge for various $k$-values in $\{1.1,1.2,...,2.0\}$ for both the databases. Alongside, the test accuracies for \texttt{MultiTaskElasticNet} is plotted at different mixing values of \texttt{l1\_ratio} in $\{1.0,0.9...,0.3,0.2,0.001\}$ for Optdigit and MNIST, both at \texttt{alpha}=0.1. Here, \texttt{l1\_ratio}=1 indicates the lasso setting and \texttt{l1\_ratio}$<1$ indicates an elastic-net setting with \texttt{l1\_ratioa}$\rightarrow 0$ approaches the $\ell_2$-norm estimation. For both databases, the results show deviation from that of $\ell_2$-norm estimation when the $k$-values move away from 2.0 for $p$-bridge and when \texttt{l1\_ratio} values move away from 0. The deviation behaviors for $p$-bridge and elastic-net are apparently different. The $p$-bridge shows relatively stable prediction with a marginally up-trend whereas the elastic-net shows a deterioration trend when moving away from the $\ell_2$-norm estimation. The peak performance for $p$-bridge shows a 99.15\% accuracy at $k=1.3$ for Optdigit and a 97.01\% accuracy at $k=1.7, 1.9$ for MNIST. For elastic-net, the mixing of $\ell_2$-norm estimation with $\ell_1$-norm estimation appears to cause significant deterioration of prediction accuracy for these two data sets. 


\begin{figure}[hhh]
	\begin{center}
		\epsfxsize=12cm
		\epsffile[113    39  1304   672]{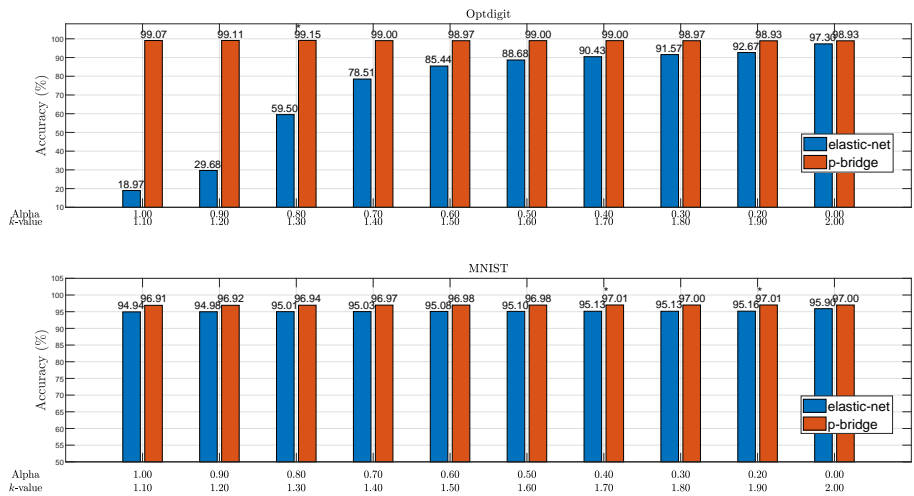} 	
		\caption{Accuracy results for the over-determined estimation. The asterisk (*) marks the highest accuracy achieved.}
		\label{fig_acc_overdetermined}
	\end{center}
\end{figure}

{\bf Under-determined case: } Fig.~\ref{fig_acc_underdetermined} shows the accuracies of prediction by the $p$-bridge and elastic-net under the under-determined (single-shot learning) scenario. These accuracies are plotted over variation of the $k$-values and the \texttt{l1\_ratio}-values respectively. Here, the prediction accuracy appears to degrade for both the $p$-bridge and the lasso for most cases when the estimations are moved away from the $\ell_2$-norm formulation ($k=2$ and \texttt{l1\_ratio} $\rightarrow 0$). However, the prediction accuracy of $p$-bridge peaks at $k=1.90$ and surpasses that based on the $\ell_2$-norm for both the databases.

\begin{figure}[hhh]
	\begin{center}
		\epsfxsize=12cm
		\epsffile[113    39  1304   672]{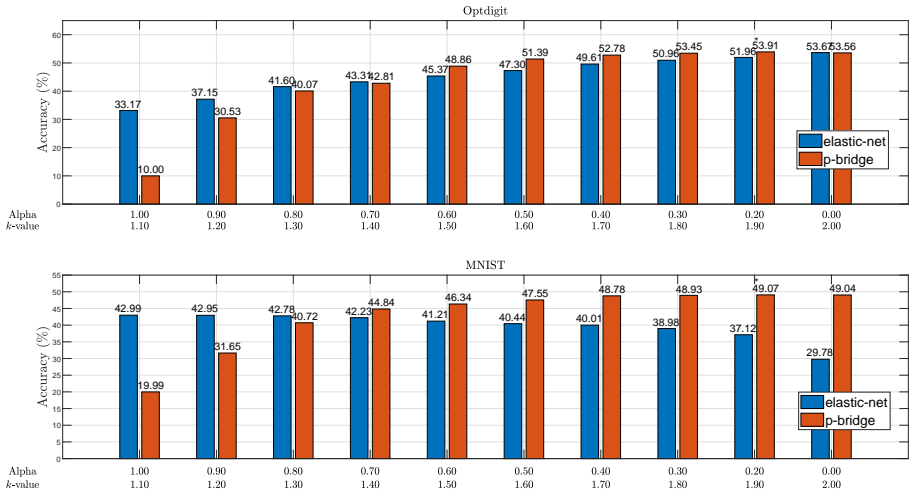} 	
		\caption{Accuracy results for the under-determined estimation (one-shot learning). The asterisk (*) marks the highest accuracy achieved.}
		\label{fig_acc_underdetermined}
	\end{center}
\end{figure}

{\bf Estimated coefficients: } Here, we study the estimated coefficients of $p$-bridge and compare them with those of lasso using the MNIST database under the over-determined setting. In order to observe the importance of each pixel in the estimation process, a weighting coefficient is tied to each pixel for the estimation. This gives rise to a linear regression model with $28\times 28+1$ parameters and 10 sets of weight coefficients corresponding to the one-hot encoded target digits. 
The lasso is set at (\texttt{l1\_ratio}=1, \texttt{alpha}=0.01) and the $p$-bridge is set at ($k=1$, $\lambda=10$). Fig.~\ref{fig_heatmap_mnist} shows the heatmap of the learned coefficient values for each of the 10 digits. In general, these heatmaps show sparseness of the coefficients due to the $\ell_1$ penalized learning for both lasso and $p$-bridge. In terms of the emphasized pixels with high absolute coefficient values, both lasso and $p$-bridge show similar locations, for both the positively emphasized (darker box) and the negatively emphasized (brighter box) pixels with variations. However, the value ranges of the coefficients differ for lasso and $p$-bridge. The lasso shows a lower prediction accuracy (82.5\%) than that of the $p$-bridge (86.1\%).

\begin{figure}[hhh]
	\begin{center}
		\epsfxsize=15cm
		\epsfysize=16cm
		\epsffile[188    73  1363   717]{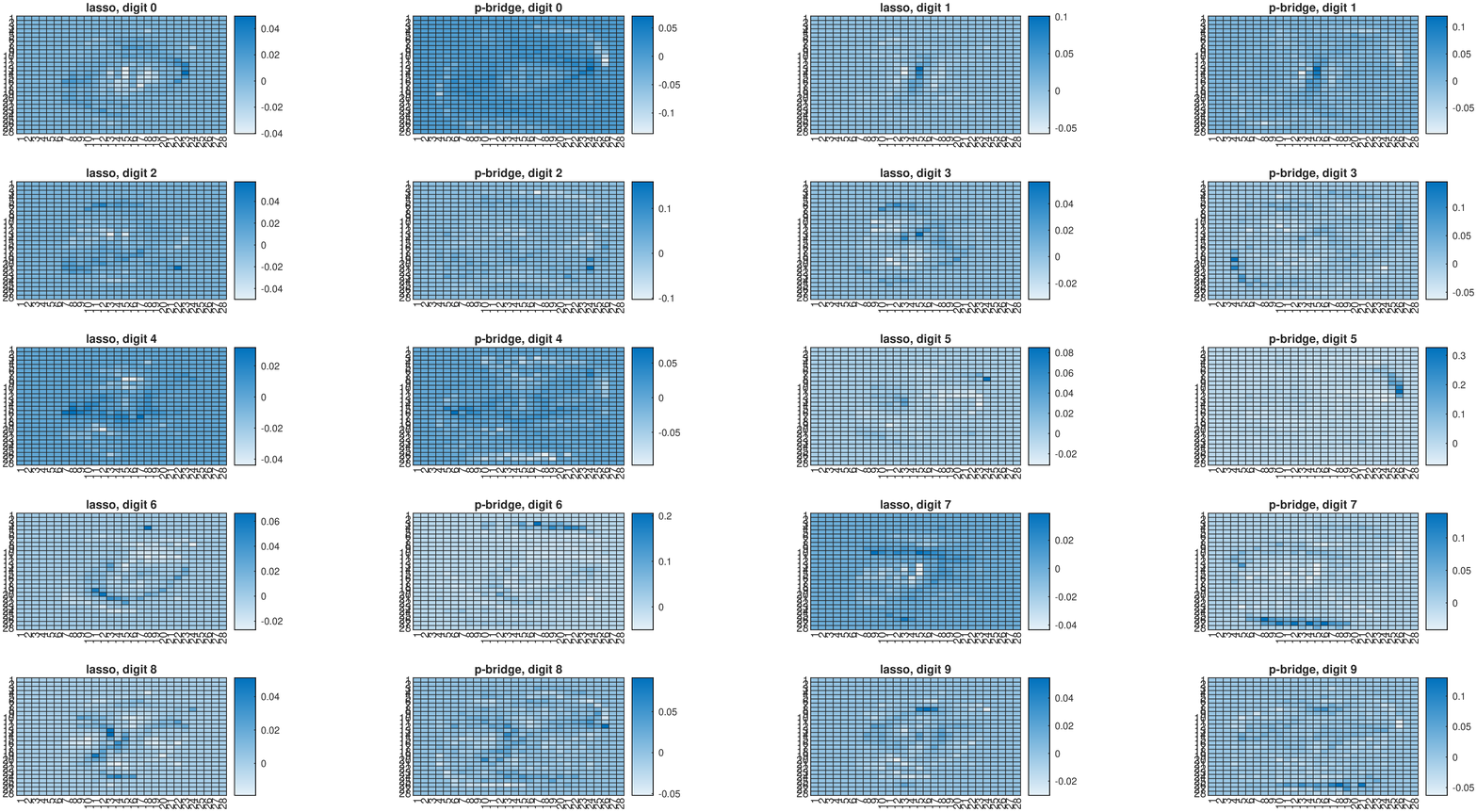} 	
		\caption{The estimated coefficients for each digit.}
		\label{fig_heatmap_mnist}
	\end{center}
\end{figure}

In order to observe which are the informative pixels responsible for discrimination, we plot the sum of training images and its logarithmic form plus the logarithm of the sum of absolute estimated coefficients corresponding to each of the pixels in Fig.~\ref{fig_totalsum_coefficients}. The main reason to plot in logarithmic form is to reveal near zero values which are not visible in the original plot. These images show close correspondence between the logarithm of sum of training images (the plot at top right) and that of the estimated coefficients (the plot at bottom right) of $p$-bridge. This results shows all the informative pixels of the image have been utilized by $p$-bridge. As for lasso, not all informative pixels have been utilized due to its crisp variable selection mechanism.

\setcounter{figure}{13}
\begin{figure}[hhh]
	\begin{center}
		\epsfxsize=12cm
		\epsffile[134    44  1007   508]{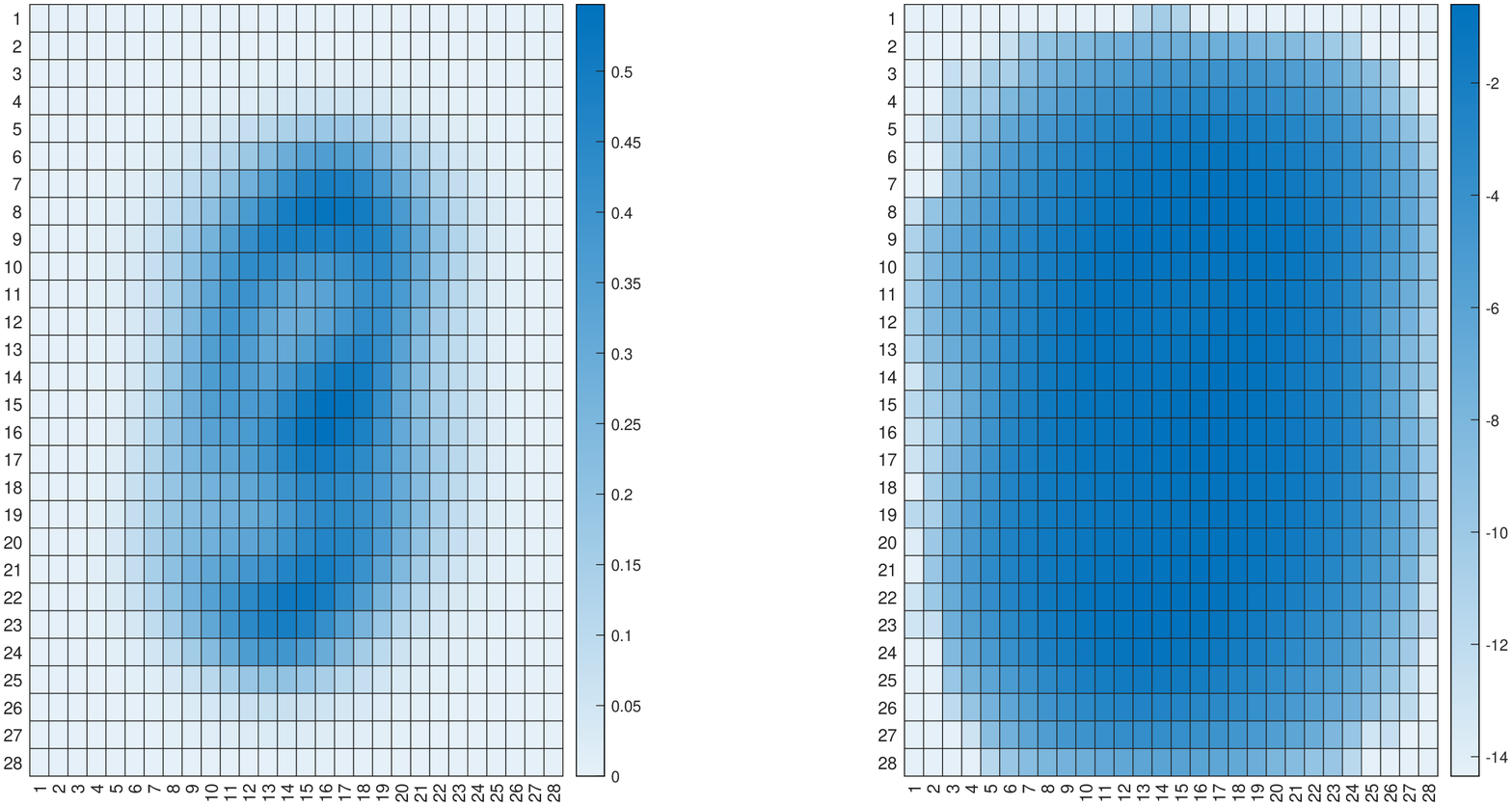} 	
		\caption{Heatmap of the sum of absolute coefficients.}
		\epsfxsize=12cm
		\epsffile[134    44  1007   508]{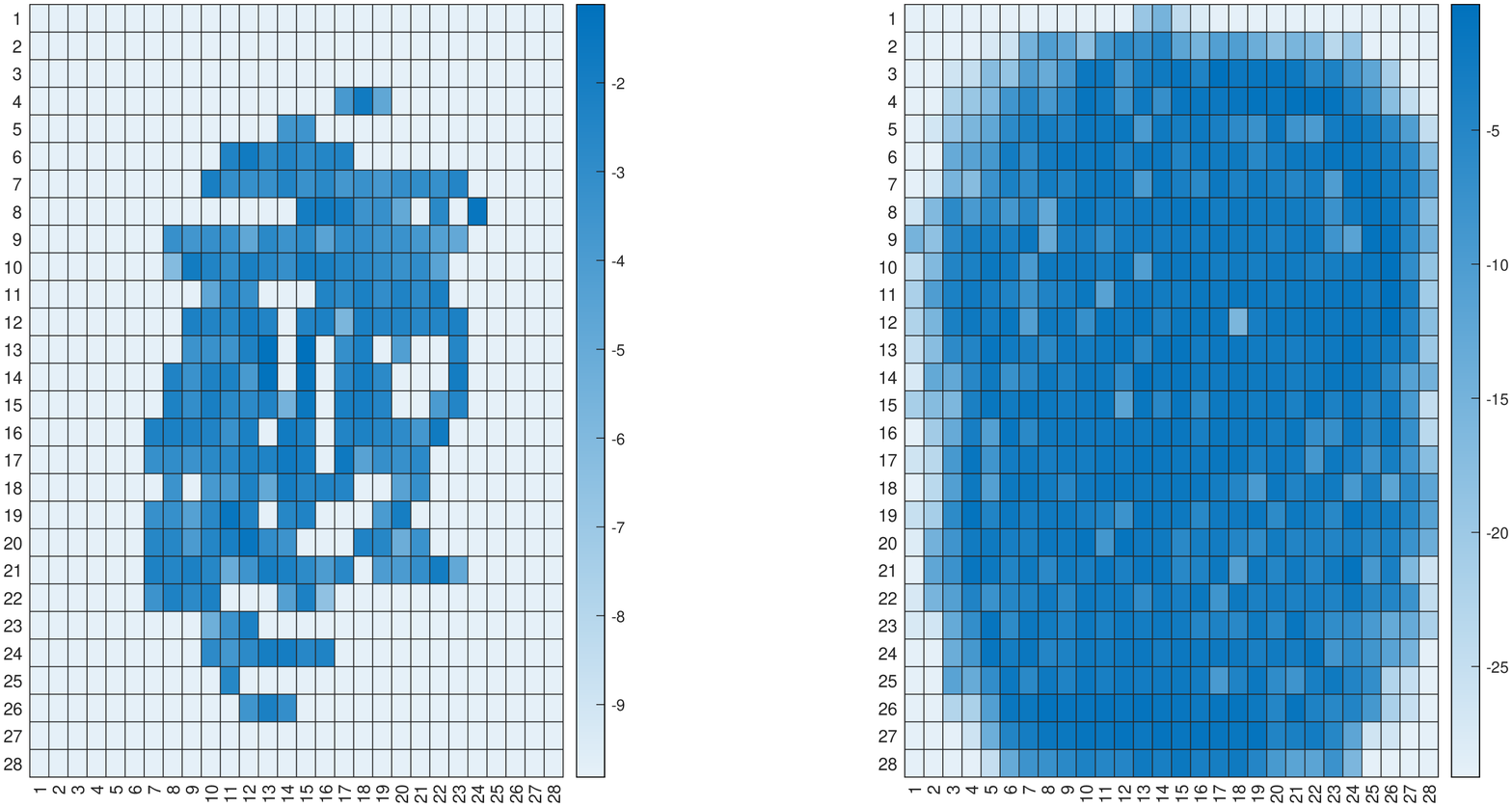} 	
		\caption{MNIST: The top two panels are heatmaps of the sum of training images (left) and its logarithmic (right) images. The bottom two panels are the heatmaps of the logarithm of sum of absolute coefficients for lasso (left) and $p$-bridge (right).}
		\label{fig_totalsum_coefficients}
	\end{center}
\end{figure}

Summarizing the experiments for Optdigit and MNIST, both the over- and the under-determined cases show peaking of prediction accuracy beyond $k=2$ ($\ell_2$-norm regression) for $p$-bridge. This reveals the importance to move away from the Gaussian prior.


\subsection{Summary of Results and Observations}

\noindent{\bf NIPS data sets: }  In this set of experiments on binary classification, we have observed the estimation and prediction behaviors of $p$-bridge on real-world and artificial data sets of large dimension at under-determined (Arcene, Dexter, and Dorothea) and over-determined (Gisette and Madelon) settings. The results and observations are summarized as follows:
\begin{itemize}
	\item Prediction accuracy: under various data situations, the $p$-bridge with adjustable $\lambda$ (strength of constraints) and $k$-value (related to the $\ell_p$-norm value) shows consistent and competing prediction accuracy relative to state-of-the-art methods namely, ols, ridge, lasso and elastic-net. For $p$-bridge at $k=1.05$ (high compressive setting), the prediction accuracy shows a compromised performance for some of the under-determined cases due to the heavy bias introduced with many regressors being suppressed.
	\item Sparseness of coefficients: in general, the $p$-bridge shows comparable sparseness of estimated coefficients relative to that of elastic-net but lower sparseness comparing to that of lasso. For under-determined systems, the $p$-bridge at $k=1.05$ achieves comparable sparseness of estimated coefficients by trading off the prediction accuracy.
	\item Training processing time: while the implemented $p$-bridge in Python was not code optimized, its training processing time shows competing processing speed with that of sklearn library's \texttt{MultiTaskElasticNet} except for the Dorothea and Madelon data sets.
\end{itemize}

\noindent{\bf Recognition of handwritten digits: }  In this set of experiments on multi-category classification based on multiple outputs formulation, the behavior of $p$-bridge is summarized as follows:
\begin{itemize}
	\item Prediction accuracy: for the over-determined setting, the $p$-bridge shows relatively stable prediction over various $k$ values with peak accuracy away from that at $k=2$. This shows the inferiority of the Gaussian prior in these cases. For the under-determined setting, the $p$-bridge shows a decreasing prediction trend of accuracy with respect to lowering of $k$ values. This can be interpreted as a trading off of accuracy with a higher sparseness of estimation.
	\item Sparseness of coefficients: an interesting observation from Fig.~\ref{fig_totalsum_coefficients} is the high agreement of the zero-value coefficients with the background regions of the summed digits for $p$-bridge. Moreover, the emphasized coefficients (for both positive and negative values) are seen to fall on unique regions for each digit in Fig.~\ref{fig_heatmap_mnist}. 
\end{itemize}

\section{Conclusion}

Pattern classification with compact representation is a key component in AI and machine learning. 
In this work, an analytic solution has been derived for bridge regression based on an approximation to the $\ell_p$-norm. The solution comes in primal form for over-determined systems and in dual form for under-determined systems. The solution in dual form, which does not find any precedents in the literature, is particularly useful for small sample size learning. These solution forms are extended for problems with multiple outputs. An algorithm has been implemented to pack the two solution forms into one framework for recognition applications. Several numerical examples and real-word data sets validated the usefulness of the algorithm for compressive applications. Interestingly, apart from the perspective of prediction accuracy, the digits recognition problems reveal linkage of those estimated coefficients to the informative pixels.

 
%



\end{document}